\documentclass[runningheads]{llncs}

\usepackage{eccv}

\usepackage{eccvabbrv}

\usepackage{graphicx}
\usepackage{booktabs}
\usepackage{subcaption}
\usepackage{caption}
\usepackage{wrapfig}
\usepackage{svg}
\usepackage{lipsum}
\usepackage[accsupp]{axessibility}  
\usepackage[table]{xcolor}
\usepackage{colortbl}
\definecolor{best}{RGB}{231,209,255}     
\definecolor{second}{RGB}{243,229,255}   
\definecolor{third}{RGB}{250,242,255}    %

\usepackage{pifont}

\newcommand{\cmark}{\textcolor{green!60!black}{\ding{51}}} 
\newcommand{\xmark}{\textcolor{red!70!black}{\ding{55}}}   

\usepackage{arydshln}

\usepackage[pagebackref,breaklinks,colorlinks,citecolor=eccvblue]{hyperref}

\usepackage{orcidlink}
\usepackage{multirow}

\begin{document}

\title{VGGT-World: Transforming VGGT into an Autoregressive Geometry World Model} 

\titlerunning{VGGT-World}

\author{
Xiangyu Sun\inst{1} \and
Shijie Wang\inst{1} \and
Fengyi Zhang\inst{1} \and
Lin Liu\inst{2} \and
Caiyan Jia\inst{2} \and
Ziying Song\inst{2} \and
Zi Huang\inst{1} \and
Yadan Luo\inst{1}\thanks{Corresponding author.}
}

\authorrunning{X. Sun et al.}

\institute{
UQMM Lab, The University of Queensland \\
\and
Beijing Jiaotong University \\
}

\maketitle

\begin{abstract}
  World models that forecast scene evolution by generating future video frames devote the bulk of their capacity to photometric details, yet the resulting predictions often remain geometrically inconsistent. We present VGGT-World, a \emph{geometry world model} that side-steps video generation entirely and instead forecasts the temporal evolution of frozen geometry-foundation-model (GFM) features. Concretely, we repurpose the latent tokens of a frozen VGGT as the world state and train a lightweight temporal flow transformer to autoregressively predict their future trajectory. Two technical challenges arise in this high-dimensional ($d{=}1024$) feature space: (i)~standard velocity-prediction flow matching collapses, and (ii)~autoregressive rollout suffers from compounding exposure bias. We address the first with a clean-target ($z$-prediction) parameterization that yields a substantially higher signal-to-noise ratio, and the second with a two-stage latent flow-forcing curriculum that progressively conditions the model on its own partially denoised rollouts. Experiments on KITTI, Cityscapes, and TartanAir demonstrate that VGGT-World significantly outperforms the strongest baselines in depth forecasting while running 3.6-5$\times$ faster with only 0.43B trainable parameters, establishing frozen GFM features as an effective and efficient predictive state for 3D world modeling.
\end{abstract}

\section{Introduction}
\label{sec:intro}
Recent 3D geometry foundation models (GFMs) \cite{wang2025vggt,wang2025pi,DBLP:journals/corr/abs-2509-13414,ma2026metricanything,DBLP:conf/cvpr/Wang0CCR24,DBLP:conf/eccv/LeroyCR24,DBLP:conf/iclr/Hong0GBZLLSB024}, such as VGGT \cite{wang2025vggt} have demonstrated that large transformers can recover holistic 3D scene structure from monocular RGB observations, including camera poses, depth maps, and point clouds. These models provide embodied agents with a strong geometric representation \cite{li2025spatial} of the observed world, especially benefiting tasks like collision-free navigation \cite{DBLP:journals/corr/abs-2507-07982,DBLP:journals/corr/abs-2511-22264,DBLP:journals/corr/abs-2405-20323} and manipulation \cite{DBLP:conf/cvpr/WangMZXLLCZCXLL24,DBLP:journals/corr/abs-2412-04380}. However, they remain fundamentally \textit{observational}: they infer geometry from \textit{available views}, but do not predict how that geometry will evolve over time. For planning and control in dynamic environments, an agent needs not only perception, but also \textbf{foresight}: the ability to anticipate future scene states that remain consistent with underlying geometric and physical dynamics.

\begin{figure}[t]
    \centering
    \includegraphics[width=1\linewidth]{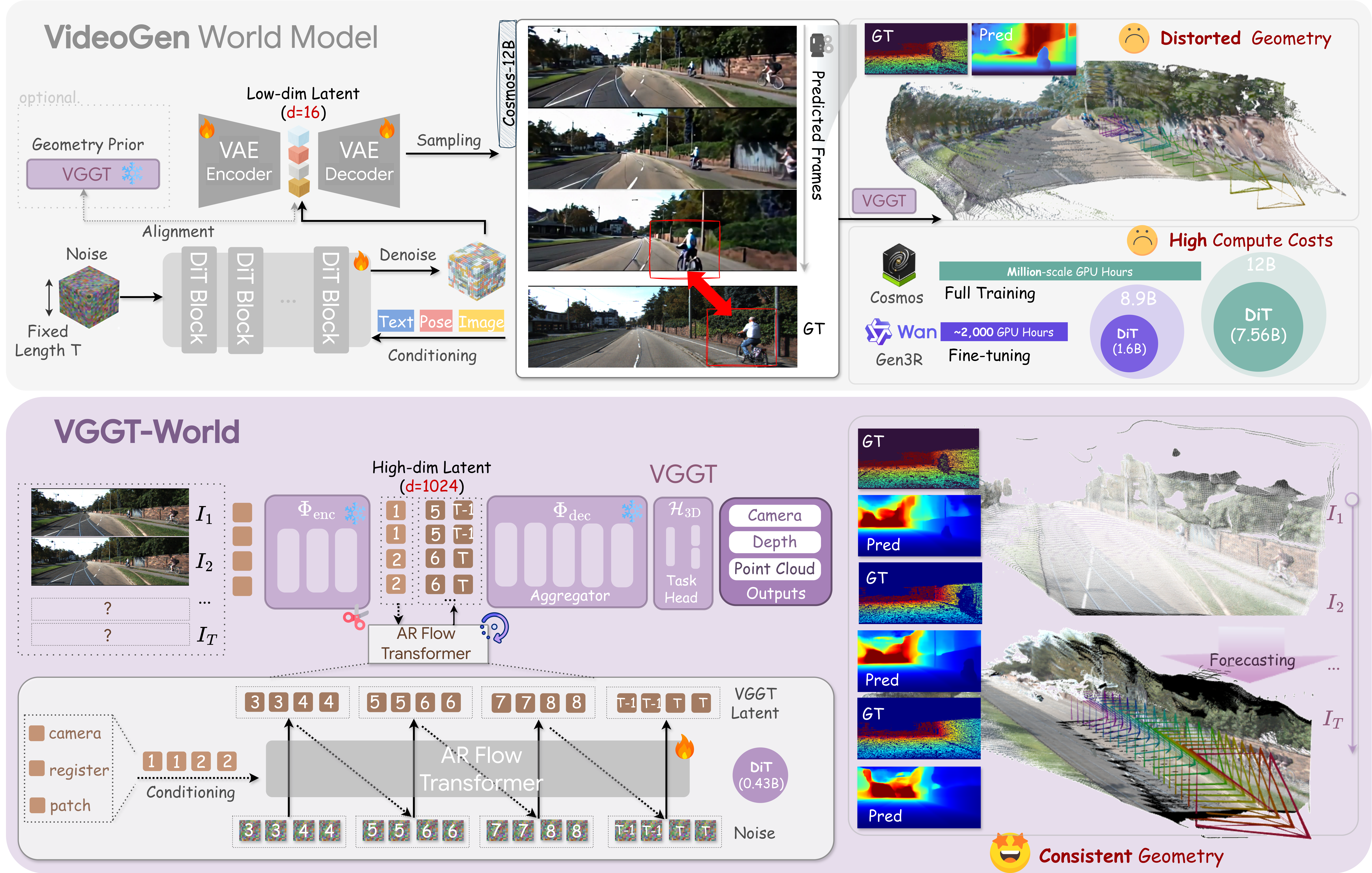}
    \caption{\textbf{From video world models to geometry world models.} Video world models predict future RGB in VAE latent space, coupling scene dynamics with appearance reconstruction. As a result, decoded predictions can remain geometrically invalid, with broken scene layout and mislocalized actors despite plausible video appearance. VGGT-World instead uses frozen geometry-foundation features as the latent state and models their temporal evolution directly, yielding a VAE-free, lightweight, and geometry-consistent alternative for future 3D forecasting.
}
\label{fig:overview}\vspace{-2ex}
\end{figure}
A natural way to equip agents with such foresight is to cast video generation models as \textit{world models}. Representative systems such as Wan \cite{wan2025wan}, MovieGen \cite{polyak2024movie}, and Cosmos \cite{DBLP:journals/corr/abs-2511-00062} typically compress videos into VAE latents \cite{DBLP:journals/corr/KingmaW13}, and train diffusion or autoregressive predictors to reconstruct future RGB frames. While visually impressive, this video-centric paradigm is \textit{poorly} suited to geometry forecasting: its objective prioritizes photometric details such as texture, lighting, and appearance changes, which is loosely coupled to 3D structure. As illustrated in Fig. \ref{fig:overview}, decoded future videos can still produce severely \textit{distorted} geometry: the recovered structure may collapse into fragmented sheet-like surfaces, and dynamic objects can be mislocalized (\eg, a cyclist may be shifted in front of the ego vehicle). Such failures are particularly concerning because they can corrupt scene understanding and potentially mislead downstream decisions. Even geometry-aware video generation variants that inject geometric priors, such as Gen3R~\cite{DBLP:journals/corr/abs-2601-04090}, GeoWorld~\cite{DBLP:journals/corr/abs-2511-23191}, GeometryForcing~\cite{DBLP:journals/corr/abs-2507-07982}, and GeoVideo~\cite{bai2025geovideo}, remain centered on future RGB reconstruction rather than direct prediction of 3D geometric evolution. Beyond this representational mismatch, the paradigm is computationally \textit{expensive}: Cosmos reportedly requires tens of millions of GPU-hours, while even Gen3R requires multi-day training on 24 H20 GPUs to finetune its VAE, making such pipelines impractical for most academic labs.

These limitations suggest that the key issue is not the forecasting mechanism itself, but how the world state is represented. Instead of learning a video VAE space, we argue that GFM features that already encodes multi-view geometry and metric structure are a more suitable state space for predictive 3D modeling. We term this perspective \textbf{geometry world modeling}: world modeling is formulated as geometry-state evolution rather than future video construction. 
Building on this idea, we introduce \textbf{VGGT-World} (Fig.~\ref{fig:overview}), which repurposes frozen VGGT features as the world state and learns only their temporal evolution. Concretely, we extract tokens from observed frames as conditions and train a temporal flow transformer to autoregressively predict future camera and patch tokens in a chunk-wise manner. The predicted tokens are then decoded using a frozen VGGT decoder and heads into 3D geometric outputs such as depth, point maps, and camera-motion cues, enabling arbitrary-horizon geometry forecasting without video VAE training or heavy video-backbone adaptation.

However, learning a geometry world model directly in frozen GFM feature space is \textit{non-trivial}. Compared with standard video VAE latents ($d=16$), VGGT states are weakly compressed and remain substantially higher-dimensional ($d=1024$), making direct flow-based prediction difficult to optimize. In addition, autoregressive rollout introduces exposure bias, since each predicted chunk must condition on previously generated geometry states that may gradually drift away from the ground truth. To address these challenges, we adopt a $z$-prediction parameterization for stable learning in high-dimensional geometry space, together with a simple two-stage flow-forcing curriculum strategy that bridges teacher-forced training and inference-time rollout.
These designs enable stable and effective learning of geometry dynamics in high-dimensional GFM feature space.

We evaluate VGGT-World on multiple datasets, including KITTI \cite{DBLP:journals/ijrr/GeigerLSU13}, Citysc- apes \cite{DBLP:conf/cvpr/CordtsORREBFRS16}, and TartanAir \cite{DBLP:conf/iros/WangZWHQWHKS20}, across a range of geometry forecasting tasks such as future depth prediction, 3D point-map forecasting, camera trajectory preservation, and efficiency analysis. Across KITTI and Cityscapes, VGGT-World consistently surpasses prior world models, reducing mean AbsRel by up to 21\% and 32\% over the strongest prior baselines while improving $\delta_1$ across both short- and mid-term horizons. It is also markedly more efficient, being 3.6-5$\times$ faster than Cosmos (12B) and Gen3R (9B) with only 0.43B trainable parameters. Together, these results establish frozen geometry foundation features as an effective and efficient predictive state for 3D geometry world modeling.

\section{Related Work}
\textbf{Geometry Foundation Model.} 3D scene understanding has recently shifted from iterative geometric optimization pipelines, such as Structure-from-Motion (SfM)~\cite{DBLP:journals/cacm/AgarwalFSSCSS11,DBLP:conf/eccv/FrahmGGJRWJDCL10,DBLP:conf/cvpr/SchonbergerF16,DBLP:journals/tog/SnavelySS06,DBLP:conf/3dim/Wu13} and Bundle Adjustment~\cite{DBLP:journals/toms/LourakisA09,DBLP:conf/iccvw/TriggsMHF99}, toward end-to-end learning. Early feed-forward multi-view reconstruction models such as DUSt3R \cite{DBLP:conf/cvpr/Wang0CCR24} demonstrate that camera poses, depth, and point clouds can be inferred directly from unposed image collections in a single forward pass. Building on this paradigm, recent geometry foundation models (GFMs), including VGGT \cite{wang2025vggt}, $\pi^3$ \cite{wang2025pi}, and MapAnything \cite{DBLP:journals/corr/abs-2509-13414,ma2026metricanything} scale geometry learning toward modality-general, representations with flexible input–output geometry prediction. However, these models remain limited to observational inputs: They excel at analyzing what is from available views but lack the ability to model temporal dynamics for forecasting future states.

\noindent\textbf{Video Generation.} Predictive modeling of scene evolution has largely been studied in the video generation community \cite{VAR,Video_diffusion_model}. Frontier systems \cite{NOVA,CausVid} such as SVD~\cite{SVD}, Sora~\cite{liu2024sora}, Cosmos~\cite{DBLP:journals/corr/abs-2511-00062}, Wan2.1~\cite{wan2025wan} learn powerful spatiotemporal priors via bidirectional denoising over full temporal windows, achieving high perceptual realism and motion coherence. Despite recent progress, these models operate primarily in pixel~\cite{videogpt} or compressed video latent spaces~\cite{CausVid} and thus lack explicit 3D structure, which limits geometric consistency over long horizons. To address this, a growing line of works introduces geometric priors into video generation. Representative models such as Geometry Forcing \cite{DBLP:journals/corr/abs-2507-07982}, GeoWorld \cite{DBLP:journals/corr/abs-2511-23191}, and Gen3R \cite{DBLP:journals/corr/abs-2601-04090} align diffusion generation with multi-view or reconstruction-based 3D representations, improving view consistency and structural stability. This trend reflects an emerging consensus that predictive world modeling requires geometry-aware representations rather than purely appearance-based video latents. Nevertheless, most generative video models still rely on VAE-compressed latents that entangle geometry and appearance, limiting causal 3D dynamics.

\noindent\textbf{Latent World Model.} Recent latent world models \cite{karypidis2025dinoforesight,DBLP:journals/corr/abs-2507-19468,law,world4drive,jia2026driveworld} increasingly replace VAE latents \cite{DBLP:journals/corr/abs-2510-15301} with the feature spaces of strong vision encoders such as DINOv2 \cite{DBLP:conf/iccv/CaronTMJMBJ21}, DINOv3 \cite{DBLP:journals/corr/abs-2508-10104} and World4Drive~\cite{world4drive}. DINO-Foresight \cite{karypidis2025dinoforesight} forecasts frozen DINO feature tokens with a masked feature transformer to enable future depth or segmentation prediction without pixels, while DINO-World \cite{DBLP:journals/corr/abs-2507-19468} learns video dynamics directly in DINOv2 feature space, showing that DINO features alone can serve as a generalist world-model latent. LAW~\cite{law} and WOTE~\cite{WOTE} performs future prediction directly in the BEV latent space. Together, these results suggest that predictive performance depends more on the choice of latent representation than on pixel-level reconstruction. However, DINO representations are primarily optimized for semantic invariance rather than explicit 3D geometry. In contrast, GFMs encode multi-view geometry, spatial consistency, and metric structure, making them inherently better suited as latent backbones for predictive 3D world modeling. We therefore advocate autoregressive world modeling directly in geometry foundation feature spaces.

\begin{figure}[t]
\centering
\begin{minipage}[b]{0.39\linewidth}
\centering
{\captionsetup[subfloat]{labelformat=empty,font=footnotesize,skip=1pt}
\subfloat[Latent SNR: $z$-pred vs $v$-pred]{%
  \includegraphics[width=.97\linewidth]{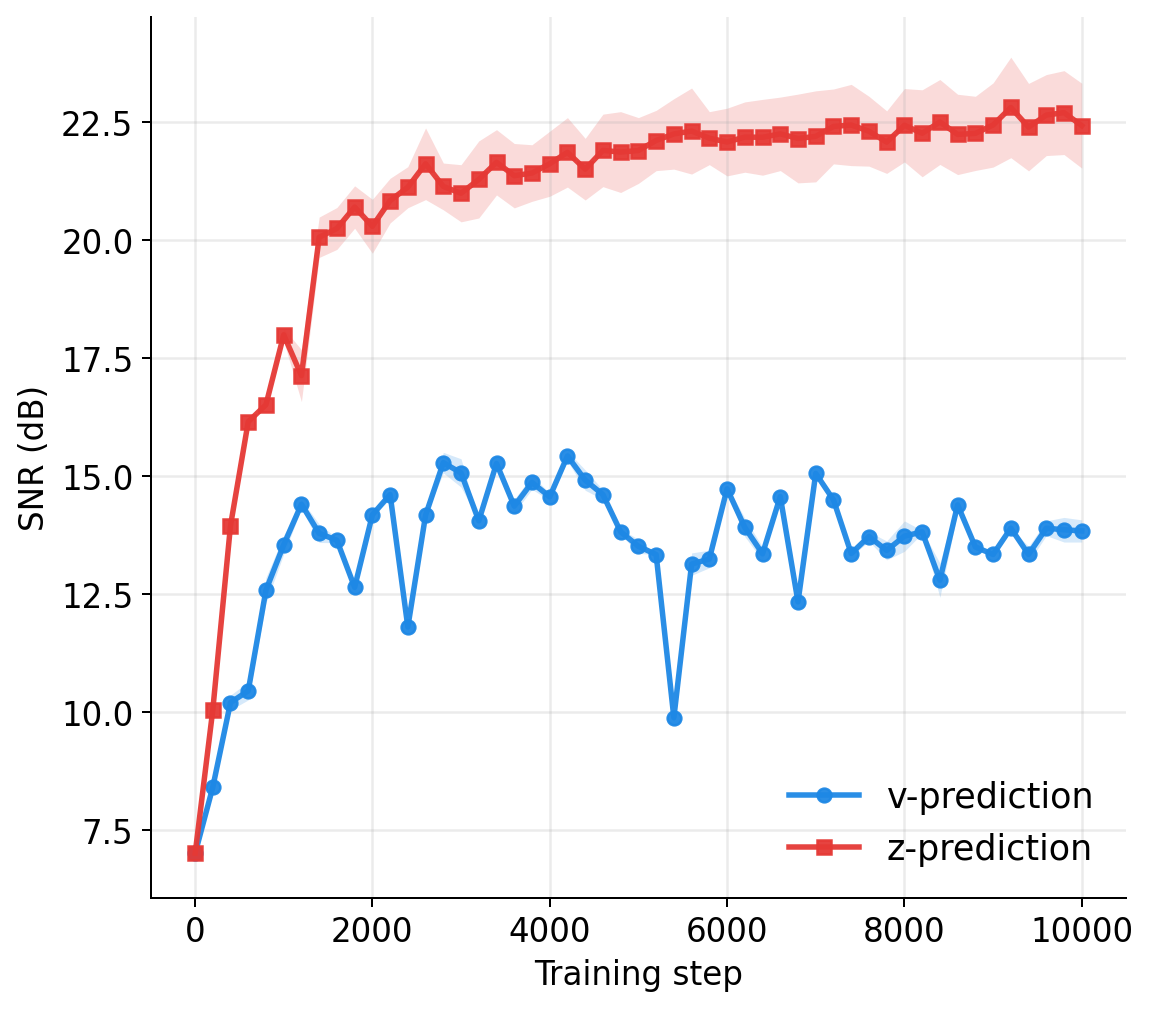}
}}
\end{minipage}
\hfill
\begin{minipage}[b]{0.6\linewidth}
\centering

\newcommand{\vsep}{\hspace{3pt}\textcolor{gray!30}{\vrule width 0.6pt}\hspace{6pt}}


{\captionsetup[subfloat]{labelformat=empty,font=footnotesize,skip=1pt}
\raisebox{0pt}[0pt][0pt]{%
\makebox[0.28\linewidth][c]{\scriptsize}%
\hfill
\makebox[0.25\linewidth][c]{\scriptsize \quad 200 iter $\longrightarrow$}%
\makebox[0.25\linewidth][c]{\scriptsize  \quad 1,000 iter $\longrightarrow$}%
\makebox[0.24\linewidth][c]{\scriptsize 4,000 iter}%
}
\subfloat[VGGT]{\includegraphics[width=0.23\linewidth]{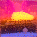}}
\hfill\vsep
\subfloat[]{\includegraphics[width=0.23\linewidth]{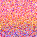}}
\hfill
\subfloat[$v$-pred train]{\includegraphics[width=0.23\linewidth]{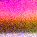}}
\hfill
\subfloat[\,]{\includegraphics[width=0.23\linewidth]{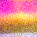}}


\setlength{\fboxsep}{0pt}\setlength{\fboxrule}{0.3pt}
\subfloat[Original]{\fbox{\includegraphics[width=0.23\linewidth]{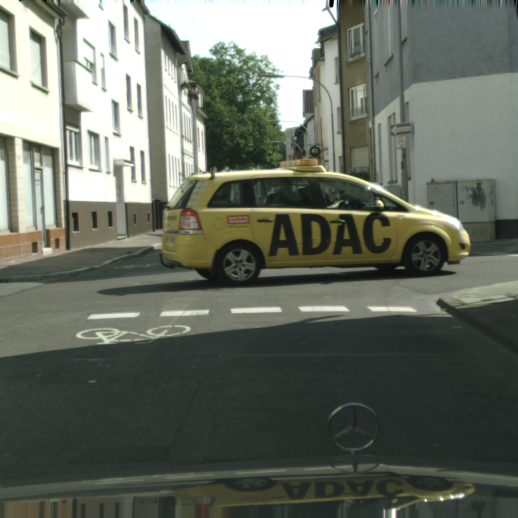}}}
\hfill\vsep
\subfloat[\,]{\includegraphics[width=0.23\linewidth]{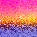}}
\hfill
\subfloat[$z$-pred train]{\includegraphics[width=0.23\linewidth]{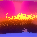}}
\hfill
\subfloat[\,]{\includegraphics[width=0.23\linewidth]{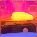}}

} 

\end{minipage}

\caption{\textbf{Comparison of $v$-prediction and $z$-prediction in frozen VGGT latent space.} Left: $z$-prediction consistently achieves substantially higher signal-to-noise ratio than $v$-prediction in the layer-4 latent space. Middle: A target VGGT feature map and its corresponding RGB frame. Right: PCA visualizations of predicted latents during training. $v$-prediction remains noisy even after prolonged training, whereas $z$-prediction progressively recovers structured latent patterns aligned with the target VGGT feature.}\label{fig:preliminary}
\end{figure}


\section{Our Approach} 
\noindent\textbf{Overview.} As depicted in Fig. \ref{fig:overview}, we formulate world modeling as predicting the temporal evolution of \textit{geometry states} in the latent space of a frozen geometry foundation model, rather than reconstructing future RGB frames. VGGT-World separates spatial representation learning from temporal dynamics modeling: a frozen VGGT encoder extracts deterministic geometry states from observed frames (\S\ref{sec:state}), a lightweight temporal flow transformer predicts their future evolution (\S\ref{sec:flow}), and the frozen VGGT decoder together with the original 3D task heads converts the predicted trajectory into geometric outputs. This design avoids training a video VAE or adapting a heavy video-generation backbone while preserving compatibility with the pretrained geometry decoder. To mitigate train-test mismatch during recursive rollout, we further introduce a two-stage latent flow-forcing curriculum that gradually exposes the model to its own generated histories (\S\ref{sec:training}).

\subsection{Geometry World Modeling via Autoregressive Flow}\label{sec:state}
To model the temporal evolution of scene geometry, we first extract deterministic geometry states from observed history frames ($k$ views), then iteratively forecast their future evolution using an autoregressive flow model, and finally decode the assembled trajectory into full 3D representations.

\noindent\textbf{Geometry State Representation.} Formally, given a video sequence of $T$ frames $\mathbf{I}_{1:T}=(I_1, I_2, \ldots, I_T)$, our goal is to model its temporal geometric evolution. We reuse the representation space of a frozen pretrained VGGT model $\Phi(\cdot)$, which consists of $L=48$ (24 self-attention and 24 global-attention) transformer blocks. The frozen VGGT 3D heads $\mathcal{H}_{\operatorname{3D}}(\cdot)$ consume a multi-scale feature hierarchy from layers $\{4, 11,17, 23\}$ in both block types. Predicting deeper layers (\eg $L_{\text{17}}$) would discard the required early layer (\eg $L_{\{\text{4},11\}}$) representation, which cannot be recovered by the forward network. We therefore use the $L_{\text{4}}$) output as the predictive state, as it is the earliest decoder-compatible representation. We partition $\Phi$ into a deterministic geometry encoder $\Phi_{\operatorname{enc}}$ (layer $0$-$4$) and a feature propagator $\Phi_{\operatorname{dec}}$ (layers $5$-$47$) to produce tokenized clean geometry states:
\begin{equation}
    \mathbf{z}_t = \Phi_{\text{enc}}(I_t) \in \mathbb{R}^{N \times d},
\end{equation}
where $t$ is the discrete temporal index and $d=1024$ is the embedding dimension. The sequence length $N = 5 + N_p$ comprises $N_p$ spatial patch tokens and $5$ special tokens (\eg, camera and register tokens).  

\noindent\textbf{Chunk-wise Autoregressive Rollout.} Rather than predicting one future frame at a time, we forecast geometry states in temporal chunks of length $m$. Given an observed history of $k$ states $\mathbf{z}_{1:k}$, our goal is to forecast the geometric evolution up to a future horizon $T$. Assuming the future spans $S$ chunks such that $T = k + S \cdot m$, we denote the $i$-th target chunk as $\mathbf{Z}_i = \mathbf{z}_{k+(i-1)m+1 : k+im}$. Let $\mathbf{c}_i$ denote the context used to predict chunk $\mathbf{Z}_i$ formed by the most recent $k$ geometry states in the trajectory. The rollout distribution factorizes as:
\begin{equation}\label{eq:AR}
    p(\mathbf{z}_{k+1:T} \mid \mathbf{z}_{1:k}) = \prod_{i=1}^{S} p(\mathbf{Z}_i \mid \mathbf{c}_i).
\end{equation}
To maintain a constant context size and bound memory complexity, we do not accumulate a growing history of context during inference. Instead, the model acts as a sliding window: it predicts a full chunk of $m$ states $\hat{\mathbf{Z}}_i$, and then extracts the latest $k$ frames from the recently decoded sequence to serve as the fixed-size condition $\mathbf{c}_{i+1}=\hat{\mathbf{Z}}_i$ for the subsequent step. This allows the model to iteratively roll out chunk-by-chunk to construct the full sequence without context explosion.

\noindent\textbf{3D Joint Decoding with Forecasted Horizon.} A unique property of VGGT-World is that forecasting the unseen future \textit{improves} the geometric estimation of the seen present. Inferring holistic 3D geometry from a highly sparse context (\eg, $k=2$) is inherently ambiguous. Our framework naturally resolves this by leveraging the forecasted future as a temporal regularizer. After the rollout produces the future chunks $\hat{\mathbf{Z}}_{1:S}$, we aggregate them with the initial sparse context $\mathbf{z}_{1:k}$ to form a latent trajectory of the full scene:
\begin{equation}
    \mathbf{z}_{1:T}^{\text{full}} = [\mathbf{z}_{1:k} ; \hat{\mathbf{Z}}_{1:S}].
\end{equation}
This assembled sequence is then propagated jointly through the remaining frozen blocks $\Phi_{\operatorname{dec}}$, and the original VGGT 3D heads decode the final outputs (\eg, point clouds $\mathcal{P}_{1:T}$, depth maps $\mathcal{D}_{1:T}$):
\begin{equation}
    \mathcal{P}_{1:T}, \mathcal{D}_{1:T} = \mathcal{H}_{\text{3D}}(\Phi_{\operatorname{dec}}(\mathbf{z}_{1:T}^{\text{full}})).
\end{equation} 
As a result, the geometric precision at the observed step $k$ is significantly enhanced precisely because the model is forced to contextualize it within a consistent, long-horizon world model rollout.
\subsection{Chunk Transition in High-Dimensional Geometry Space} \label{sec:flow}
To model the conditional transition of the next chunk $p(\mathbf{Z}_i \mid \mathbf{c}_i)$ in Eq. \eqref{eq:AR}, we adopt continuous-time flow matching \cite{DBLP:conf/iclr/LiuG023} in the geometry latent space. Standard flow matching \cite{DBLP:conf/iclr/LiuG023} defines a probability path $\mathbf{Z}_{i, \tau} = (1-\tau)\mathbf{Z}_i + \tau \epsilon$ governed by the ODE:
\begin{equation}
    \frac{d \mathbf{Z}_{i, \tau}}{d\tau} = v_\theta(\mathbf{Z}_{i, \tau}, \tau, \mathbf{c}_i),
\end{equation} where $v_\theta$ is typically trained to predict the vector field velocity ($\mathbf{v}$-prediction).

\noindent\textbf{The Curse of Dimensionality.} However, we empirically found that standard $\mathbf{v}$-prediction \textbf{\textit{fails}} in our setting (Fig. \ref{fig:preliminary}): even after 4k training iterations, the denoised predictions remain noisy and weakly structured. We attribute this failure to the \textit{high} dimensionality of our target space: unlike video VAEs operating in heavily compressed latents (\eg, 16–32 channels), we model dynamics directly in the 1024-dimensional VGGT feature spacewhere valid geometry states occupy only a structured subset of the ambient space. Isotropic Gaussian corruption therefore, mainly perturbs states along off-manifold directions, rendering velocity estimation ill-conditioned. In this regime, small velocity errors quickly drive trajectories away from the true geometry manifold and are further amplified by $\Phi_{\text{dec}}$ and $\mathcal{H}_{\text{3D}}$. While a common remedy is to learn an additional dimensionality reduction module \cite{karypidis2025dinoforesight}, this would break alignment with the frozen VGGT heads and destroy their zero-shot geometric priors.

\noindent\textbf{$z$-prediction Flow.} We therefore directly predict the clean target latent $\mathbf{Z}_i$ instead of its velocity. In spirit, this is related to recent findings \cite{DBLP:journals/corr/abs-2511-13720,baade2026latent} that direct clean-target prediction can provide stronger denoising signals than velocity prediction; however, unlike pixel-space $x$-prediction, our model predicts the clean geometry latent state. Fig. \ref{fig:preliminary} confirms this choice empirically: $\mathbf{z}$-prediction yields higher training SNR and progressively recovers structured target latents.

\noindent\textbf{Implementation.} Consequently, we explicitly parameterize our network $F_\theta$ to predict the clean latents of the target chunk directly ($z$-prediction). To enforce the separation of temporal reasoning and spatial refinement, we decouple $F_\theta$ into a composition of two distinct sub-networks: a dual-stream causal processor $F_{\text{dual}}$ and a single-stream spatial denoiser $F_{\text{single}}$:
\begin{equation}
    \mathbf{Z}_{i} \approx F_\theta(\mathbf{Z}_{i,\tau}, \tau, \mathbf{c}_i) = F_{\text{single}}\Big( F_{\text{dual}}(\mathbf{Z}_{i,\tau}, \tau, \mathbf{c}_i), \; \tau \Big).
\end{equation}
This architectural design aligns with very recent analyses \cite{bai2026causality} showing that causality in video diffusers is separable from spatial denoising: temporal attention naturally dominates \textit{early layers} to establish historical context, while \textit{deeper layers} become highly diagonal, focusing almost entirely on intra-frame refinement. Specifically, continuous flow time $\tau$ is injected into all blocks via Adaptive Layer Normalization (adaLN). For early dual-stream blocks $l \in [1, L_{\text{d}}]$, the noisy $m$-frame target chunk $\mathbf{Z}_{i, \tau}$ asymmetrically cross-attends to the condition $\mathbf{c}_i$, absorbing temporal physics without corrupting the clean history representations:
\begin{equation}
    \mathbf{Z}_{i, \tau}^{(l+1)} = \mathbf{Z}_{i, \tau}^{(l)} + \text{Attn}\Big(\text{Q}=\text{adaLN}(\mathbf{Z}_{i, \tau}^{(l)}, \tau), \; \text{K,V}=[\text{adaLN}(\mathbf{Z}_{i, \tau}^{(l)}, \tau); ~\mathbf{c}_i]\Big).
\end{equation}
The deeper layers then transition to single-stream blocks ($l\in[L_{\text{d}}+1, L_{\text{d}}+L_{\text{s}}])$, operating exclusively on the target chunk via joint intra-chunk self-attention to focus entirely on local spatiotemporal geometry refinement:
\begin{equation}
    \mathbf{Z}_{i, \tau}^{(l+1)} = \mathbf{Z}_{i, \tau}^{(l)} + \text{Self-Attn}\Big(\text{Q,K,V}=\text{adaLN}(\mathbf{Z}_{i, \tau}^{(l)}, \tau)\Big).
\end{equation}

\subsection{Latent Flow Forcing Curriculum}\label{sec:training}
Autoregressive models trained exclusively on ground-truth histories suffer from severe exposure bias during inference. In our high-dimensional geometry space, predicting a sequence of chunks leads to compounding errors: a slight deviation in the predicted chunk $\hat{\mathbf{Z}}_i$ corrupts the subsequent condition $\mathbf{c}_{i+1}$, rapidly pushing the rollout trajectory off the true geometry manifold. To guarantee stable long-horizon rollouts, we propose a two-stage latent flow forcing curriculum.

\noindent\textbf{Stage 1: Teaching-Forcing Training.} In the first stage, we establish the base continuous-time vector field using oracle historical conditions. Specifically, the context is set as $\mathbf{c}_i = \mathbf{Z}_{i-1}$, where $\mathbf{Z}_{i-1}$ denotes the ground-truth VGGT latent extracted from previous frames. As established in Section \ref{sec:flow}, we optimize $F_\theta$ via a weighted $z$-prediction scheme to mitigate the extreme variance of the high-dimensional latent space. The Stage 1 objective is purely teacher-forced:
\begin{equation}
    \mathcal{L}_{\text{S1}} = \mathbb{E}_{\mathbf{Z}_i, \epsilon, \tau} \left[ \left\| \frac{F_\theta(\mathbf{Z}_{i,\tau}, \tau, \mathbf{c}_i) - \mathbf{Z}_{i,\tau}}{1-\tau} - \frac{\mathbf{Z}_i - \mathbf{Z}_{i,\tau}}{1-\tau} \right\|_2^2 \right],
\end{equation}
However, at test time the model must condition on its own predicted history $\mathbf{c}_i =\hat{\mathbf{Z}}_{i-1}$ rather than oracle latents, creating a train-test mismatch. As a result, errors in the context compound across rollout, making the learned operator increasingly brittle over long horizons.

 \noindent\textbf{Stage 2: Trajectory-Consistent Flow Forcing.} To mitigate this gap, in Stage 2, we expose the model to structured errors from its own rollout. Despite prior attempts~\cite{chen2024diffusion,huang2025self} to mitigate autoregressive drift, existing strategies are not ideal for our setting: analytic Gaussian corruption tends to move weakly compressed 1024-D VGGT latents off the geometry manifold, while fully self-generated histories are often slow in training. We therefore use partially denoised states from the model’s own ODE rollout as structured perturbations, yielding a smoother transition from oracle conditioning to self-conditioning.
 
\noindent \textbf{Partially-denoised Rollout State.} To predict the target chunk $\mathbf{Z}_{i+1}$, we first roll out the preceding chunk $\mathbf{Z}_i$ from noise $\mathbf{Z}_{i, 1} \sim \mathcal{N}(\mathbf{0}, \mathbf{I})$ under its clean history $\mathbf{c}_i=\mathbf{Z}_{i-1}$, but stop the ODE integration using our Euler solver at an intermediate flow time $\tau_{\text{mid}} \in (0,1)$:
\begin{equation}
    \hat{\mathbf{Z}}_{i, \tau_{\text{mid}}} = \text{ODESolve}\Big(F_\theta, \mathbf{Z}_{i, 1}, \mathbf{c}_i, \tau \in [1 \to \tau_{\text{mid}}]\Big).
\end{equation} 
This intermediate state $\hat{\mathbf{Z}}_{i, \tau_{\text{mid}}}$ is a partially denoised rollout. 
To formulate the condition for the next chunk $i+1$, we do not fully replace the ground truth immediately. We construct a mixed condition by interpolating this partially integrated trajectory with the clean ground-truth chunk $\mathbf{Z}_i$ using $\lambda\in[0,1]$:
\begin{equation}
    \mathbf{c}_{i+1}^{\text{mix}}(\lambda, \tau_{\text{mid}}) = (1-\lambda)\mathbf{Z}_i + \lambda\hat{\mathbf{Z}}_{i, \tau_{\text{mid}}}.
\end{equation}
Finally, we optimize $F_\theta$ to predict the clean target chunk $\mathbf{Z}_{i+1}$ conditioned on this mixed, partially denoised history. The target chunk itself is perturbed to a flow time $\tau$, yielding the Stage 2 finetuning objective:
\begin{equation}
    \mathcal{L}_{\text{S2}} = \mathbb{E}_{i, \lambda, \tau, \tau_{\text{mid}}} \left[ \left\| \mathbf{Z}_{i+1} - F_\theta(\mathbf{Z}_{i+1,\tau}, \tau, \mathbf{c}_{i+1}^{\text{mix}}(\lambda, \tau_{\text{mid}})) \right\|_2^2 \right].
\end{equation}
 \noindent \textbf{Curriculum Stability.} In Appendix, we further show theoretical analysis of the proposed flow forcing strategy establishes a sequential ELBO on the uncorrupted data likelihood (detailed in Theorem 1). However, the stability of this bound relies heavily on the variance of the rollout error, $\mathbf{E}_i = \hat{\mathbf{Z}}_{i, \tau_{\text{mid}}} - \mathbf{Z}_i$. Because our mixed condition expands to $\mathbf{c}_{i+1}^{\text{mix}} = \mathbf{Z}_i + \lambda \mathbf{E}_i$, the variance injected into the conditioning signal scales proportionally with $\lambda^2 \mathbb{E}[\|\mathbf{E}_i\|^2]$. Early in training, model can produce unstructured rollouts, meaning $\mathbb{E}[\|\mathbf{E}_i\|^2]$ is prohibitively large. Setting $\lambda$ too high would inject massive variance, completely violating the step-wise KL penalty and causing the ELBO to collapse. To enforce convergence and maintain a tight lower bound, we introduce a \textit{curriculum learning} schedule for the mixing parameter $\lambda$. We linearly anneal $\lambda$ from $0$ to $1$ as finetuning progresses. As the intrinsic rollout error decays, smoothly increasing $\lambda$ toward $1$ progressively exposes the network to its inference-time distribution.
\section{Experiments}
\noindent\textbf{Dataset.}
We assess our approach using the Cityscapes \cite{DBLP:conf/cvpr/CordtsORREBFRS16}, KITTI \cite{DBLP:journals/ijrr/GeigerLSU13} and TartanAir \cite{DBLP:conf/iros/WangZWHQWHKS20} datasets, offering video sequences from urban driving environments and diverse 6-DoF camera trajectories in photorealistic simulated scenes. Cityscapes includes 2,975 training sequences and 500 validation sequences, each containing 30 frames captured at 16 FPS with a resolution of $1024 \times 2048$ pixels. 
KITTI contains 143 training sequences and 13 validation sequences, with a native resolution of $512 \times 1382$ and a frame rate of 10 FPS. TartanAir is a large-scale synthetic dataset rendered in Unreal Engine, covering diverse indoor and outdoor environments, with approximately 700k frames in total. Following the split protocol of Gen3R~\cite{DBLP:journals/corr/abs-2601-04090}, we randomly use 90\% of the data for training and the remaining 10\% for validation.

\noindent \textbf{Implementation Details.}
For the two-stage training, we adopt AdamW with a learning rate of $2\times10^{-4}$ and a weight decay of 0.05, and apply gradient clipping with $\ell_2$ norm of 1.0. Training is conducted on 1 RTX Pro 6000 (96\,GB) GPU. The batch size is set to 64 for Teaching-Forcing Training and 48 for Trajectory-Consistent Flow Forcing. We set the initial chunk size to 4 and perform autoregressive rollout with a stride of 1. The flow transformer is alike to\cite{DBLP:journals/corr/abs-2506-15742}, comprising $L_{d}=8$ double-stream Transformer blocks and $L_{s}=8$ single-stream Transformer blocks. We retain the original intermediate feature dimensionality of 1024 without feature compression. Additional details are provided in Appendix.

\noindent \textbf{Evaluation Metrics.} For depth prediction, we report the mean Absolute Relative Error (AbsRel) and depth accuracy ($\delta_1$). Following prior future prediction works~\cite{karypidis2025dinoforesight,DBLP:journals/corr/abs-2507-19468}, we evaluate on Cityscapes under short-term (3 frames, 187.5ms) and mid-term (9 frames, 562.5ms) forecasting settings, and on KITTI under short-term (2 frames, 200 ms) and mid-term (6 frames, 600 ms) settings. 
To assess 3D point cloud quality, following the evaluation protocol of~\cite{DBLP:journals/corr/abs-2601-04090}, we first align the predicted point clouds to the ground truth using Umeyama similarity alignment \cite{DBLP:journals/pami/Umeyama91}, then sample 20k points from both sets via Farthest Point Sampling (FPS)~\cite{DBLP:conf/nips/QiYSG17}, and compute standard geometric metrics including Accuracy, Completeness, and Chamfer Distance (CD) \cite{DBLP:conf/cvpr/FanSG17}.

\begin{table}[t]
\caption{\textbf{Depth forecasting on KITTI.} 
We evaluate representative world-model baselines on future scene depth prediction, reporting AbsRel ($\downarrow$) and $\delta_1$ ($\uparrow$) for short- and mid-term horizons. 
}
\label{tab:Kitti}
\centering
\small
\setlength{\tabcolsep}{6pt}
\renewcommand{\arraystretch}{1.05}
\resizebox{\linewidth}{!}{
\begin{tabular}{l cccc cccc cccc}
\toprule
\multirow{3}{*}{Method}
& \multicolumn{4}{c}{\texttt{2011\_09\_26\_drive\_0002\_sync}} & \multicolumn{4}{c}{\texttt{2011\_10\_03\_drive\_0047\_sync}} & \multicolumn{4}{c}{Mean (Dataset Avg.)} \\
\cmidrule(lr){2-5}\cmidrule(lr){6-9}\cmidrule(lr){10-13}
& \multicolumn{2}{c}{AbsRel ($\downarrow$)} & \multicolumn{2}{c}{$\delta_1$ ($\uparrow$)}
& \multicolumn{2}{c}{AbsRel ($\downarrow$)} & \multicolumn{2}{c}{$\delta_1$ ($\uparrow$)}
& \multicolumn{2}{c}{AbsRel ($\downarrow$)} & \multicolumn{2}{c}{$\delta_1$ ($\uparrow$)} \\
\cmidrule(lr){2-3}\cmidrule(lr){4-5}\cmidrule(lr){6-7}\cmidrule(lr){8-9}\cmidrule(lr){10-11}\cmidrule(lr){12-13}
& Short & Mid & Short & Mid
& Short & Mid & Short & Mid
& Short & Mid & Short & Mid \\
\midrule
Cosmos-4B~\cite{DBLP:journals/corr/abs-2501-03575}          & 0.138 & 0.148  &  84.0 & 81.4 & 0.155 & 0.156 & 77.5 & 75.2 & 0.155 &  0.165 & 77.1 & 74.1  \\
Cosmos-12B~\cite{DBLP:journals/corr/abs-2501-03575}          & 0.131 & 0.139 & 84.5 & 79.7 & 0.149 & 0.176 & 77.6 & 75.9 & 0.154 & 0.185 & 77.5 & 71.5 \\
\midrule
Copy Last {\scriptsize (VGGT~\cite{wang2025vggt})}              & 0.112 & 0.125 & 89.6 & 82.6 & 0.140 & 0.153 & 84.4 & 79.6 & 0.137 & 0.144 & 83.7 &  78.5\\
Aether~\cite{DBLP:journals/corr/abs-2503-18945}                 & \cellcolor{third!80}0.074 & \cellcolor{third!80}0.102 & \cellcolor{third!80}94.2 & \cellcolor{third!80}88.0 & \cellcolor{third!80}0.113  & \cellcolor{third!80}0.111 & \cellcolor{third!80}87.4 & \cellcolor{third!80}85.2 & \cellcolor{third!80}0.096 & \cellcolor{third!80}0.124 & \cellcolor{third!80}87.9 & \cellcolor{third!80}84.7 \\
DINO-Foresight~\cite{karypidis2025dinoforesight}        & \cellcolor{second!80}0.063 & \cellcolor{second!80}0.105 & \cellcolor{second!80}96.5 & \cellcolor{second!80}89.9 & \cellcolor{second!80}0.076 & \cellcolor{second!80}0.107 & \cellcolor{second!80}90.9 & \cellcolor{second!80}86.7 & \cellcolor{second!80}0.082 & \cellcolor{second!80}0.115 & \cellcolor{second!80}91.6 & \cellcolor{second!80}85.1 \\
\midrule
VGGT-World                & \cellcolor{best!80}\textbf{0.050} & \cellcolor{best!80}\textbf{0.078} & \cellcolor{best!80}\textbf{98.3}  &  \cellcolor{best!80}\textbf{93.2} &  \cellcolor{best!80}\textbf{0.064} & \cellcolor{best!80}\textbf{0.101}  & \cellcolor{best!80}\textbf{94.9}  & \cellcolor{best!80}\textbf{92.7} & \cellcolor{best!80}\textbf{0.065} & \cellcolor{best!80}\textbf{0.098} & \cellcolor{best!80}\textbf{94.0} &\cellcolor{best!80}\textbf{89.4} \\
\bottomrule
\end{tabular}
}
\end{table}

\begin{figure}[t]
    \centering
    \includegraphics[width=\linewidth,height=0.6\textheight,keepaspectratio]{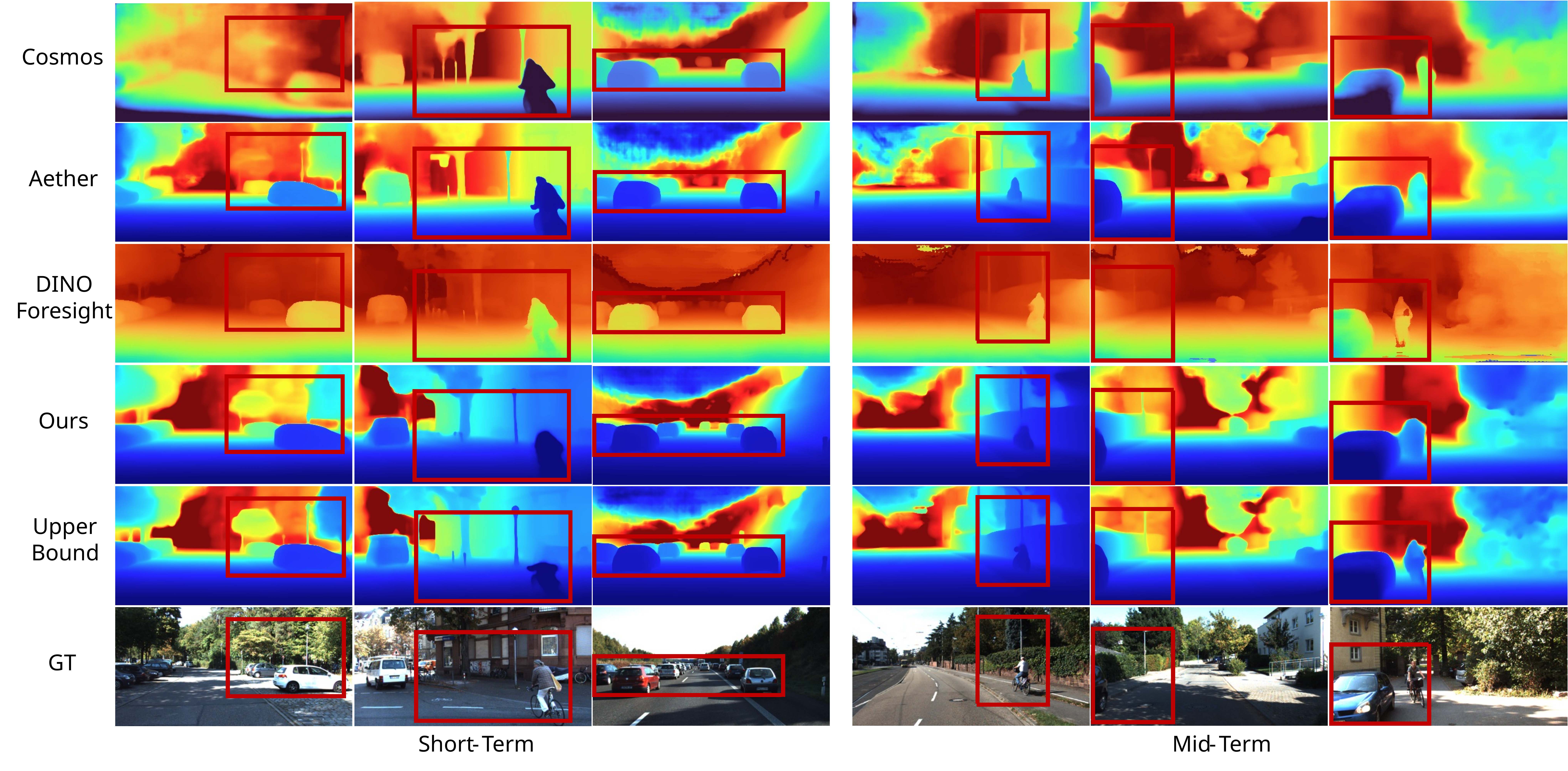}\vspace{-2ex}
    \caption{\textbf{The short- and mid-term depth forecasting results on KITTI.} ``Upper Bound'' denotes the depth obtained by feeding the real future images into VGGT. ``DINO-Foresight'' predicts depth in the scale of Depth Anything~\cite{DBLP:conf/nips/YangKH0XFZ24}, leading to a different visualization appearance compared to others.}\vspace{-4ex}
    \label{fig:kitti_vis}
\end{figure}

\noindent\textbf{Baselines.}
We compare our method with three categories of state-of-the-art world models: (1) \textit{pixel-space 2D generation models}, including Cosmos 4B/12B \cite{DBLP:journals/corr/abs-2501-03575} and VISTA~\cite{DBLP:conf/nips/GaoY0CQ0Z024} (we use its fine-tuned results following~\cite{karypidis2025dinoforesight}); (2) \textit{latent-space 2D generation models}, represented by DINO-Foresight~\cite{karypidis2025dinoforesight}; and (3) \textit{geometry-aware generation models}, such as Aether~\cite{DBLP:journals/corr/abs-2503-18945}, WVD~\cite{DBLP:conf/cvpr/ZhangZMMTSG25}, and Gen3R~\cite{DBLP:journals/corr/abs-2601-04090}. Note that we also include a \textsc{Copy-Last} baseline, where the depth predicted from the ground-truth RGB of the previous frame.

\subsection{Depth Forecasting}

\noindent\textbf{Results on KITTI.}
Table~\ref{tab:Kitti} and Fig.~\ref{fig:kitti_vis} compare VGGT-World with representative world models on depth forecasting. \textit{Quantitatively}, our method consistently achieves the best performance across all horizons, attaining a mean $\delta_1$ of 94.0\%/89.4\% and reducing AbsRel to 0.065/0.098, corresponding to relative improvements of 21\%/15\% on the short- and mid-term horizons respectively. Notably, the pixel-space Cosmos (4B/12B) \cite{DBLP:journals/corr/abs-2501-03575} underperforms even the naive \textsc{Copy-Last} baseline \cite{wang2025vggt}, indicating that scaling video generation without 3D priors prioritizes photometric realism over geometric consistency. While Aether~\cite{DBLP:journals/corr/abs-2503-18945} incorporates geometry supervision, it still underperforms due to pixel-space representation. Additionally, although the latent-space DINO-Foresight~\cite{karypidis2025dinoforesight} achieves competitive accuracy, its performance is largely driven by fine-tuning on ground-truth depth. In contrast, our method, trained in the geometry latent space, achieves superior performance without any fine-tuning on ground-truth depth. \textit{Qualitatively}, our model produces more faithful depth maps shown in Fig.~\ref{fig:kitti_vis}, mitigating distant-region errors and geometric distortions observed in Cosmos \cite{DBLP:journals/corr/abs-2501-03575} and Aether \cite{DBLP:journals/corr/abs-2503-18945}, while preserving sharp structural details and temporal-geometric consistency, demonstrating stronger geometry modeling capability.

\noindent\textbf{Results on Cityscapes.} On Cityscapes, the main challenge is not only short-\begin{wraptable}{r}{0.5\linewidth}\vspace{-4ex}
\caption{\textbf{Depth forecasting on Citys-} \textbf{capes} with AbsRel and depth accuracy $\delta_1$.}\vspace{2ex}
\label{tab:cityscapes}
\centering
\small
\resizebox{\linewidth}{!}{
\renewcommand{\arraystretch}{1.05}
\begin{tabular}{l cc cc}
\toprule
\multirow{2}{*}{Method}
& \multicolumn{2}{c}{AbsRel ($\downarrow$)}
& \multicolumn{2}{c}{$\delta_1$ ($\uparrow$)} \\
\cmidrule(lr){2-3}\cmidrule(lr){4-5}
& Short & Mid & Short & Mid \\
\midrule
Cosmos-4B~\cite{DBLP:journals/corr/abs-2501-03575}  & 0.189 & 0.247 & 76.3 & 70.5 \\
Cosmos-12B~\cite{DBLP:journals/corr/abs-2501-03575} & 0.181 & 0.241 & 78.2 & 72.4 \\
\midrule
\textsc{Copy-Last} {\scriptsize (VGGT~\cite{wang2025vggt})}  & 0.154 & 0.212 & 84.1 & 77.8 \\
VISTA$_{\mathrm{ft}}$~\cite{DBLP:conf/nips/GaoY0CQ0Z024}& 0.124 & 0.153 & 86.4 & 82.8 \\
DINO-Foresight~\cite{karypidis2025dinoforesight} 
& \cellcolor{third!80}0.114 
& \cellcolor{third!80}0.136 
& \cellcolor{third!80}88.6 
& \cellcolor{third!80}85.4 \\
\midrule
VGGT-World {\scriptsize w/o Forcing}
& \cellcolor{second!80}0.079 
& \cellcolor{second!80}0.131 
& \cellcolor{second!80}93.7 
& \cellcolor{second!80}86.8 \\

VGGT-World
& \cellcolor{best!50}\textbf{0.078} 
& \cellcolor{best!50}\textbf{0.126} 
& \cellcolor{best!50}\textbf{93.8} 
& \cellcolor{best!50}\textbf{87.3} \\
\midrule
VGGT-World ($\lambda=0.1$) & 0.084 & 0.142 & 93.1 & 85.0 \\
VGGT-World ($\lambda=0.3$) & 0.089 & 0.148 & 92.6 & 84.9 \\
VGGT-World ($\lambda=0.7$) & 0.109 & 0.165 & 90.1 & 81.4 \\
\bottomrule
\end{tabular}\vspace{-3ex}
}
\end{wraptable}term frame extrapolation, but maintaining metrically stable depth under continuous ego-motion and large perspective changes.  This is reflected by the larger gap between short- and mid-term results across all methods. As shown in Table~\ref{tab:cityscapes}, our method achieves the best performance on both metrics across short- and mid-term horizons, improving AbsRel and $\delta_1$ by up to 32\%. Mid-term forecasting is inherently more challenging, as small structural errors in earlier predictions can accumulate and lead to larger depth deviations. More qualitative results are provided in the Appendix for reference.

\subsection{Point Map Forecasting}
Table~\ref{tab:Tartanair} provides point cloud forecasting results on TartanAir, demonstrating that \textsc{VGGT-World} achieves superior geometry accuracy by predicting scene directly within geometry-centric latent spaces. Compared to pixel-space generative model \textsc{Gen3R} which models dynamics in pixel space, our method significantly outperforms it on all metrics in the pose/text-free setting, while remaining competitive even when it is aided by explicit pose conditions. Furthermore, the performance gain of the full model over the w/o Forcing variant validates that our Trajectory-Consistent Flow Forcing effectively mitigates exposure bias during rollout. These results confirm that by repurposing spatiotemporal features rather than reconstructing pixels, \textsc{VGGT-World} ensures superior geometric fidelity at a fraction of the computational cost of existing generative baselines. Qualitative results in Fig. \ref{fig:tartanair} further illustrate the superior geometric fidelity of \textsc{VGGT-World}. Without pose/text conditioning, \textsc{Gen3R} exhibits significant structural disorganization and blurry distortions, failing to maintain scene integrity. In contrast, our method produces well-defined geometry that closely adheres to the ground-truth topology across both indoor and outdoor scenes. 
\begin{table}[t]
\centering
\caption{\textbf{Geometry generation on TartanAir} with different conditioning (text, pose) under 1-view and 2-view settings.}
\label{tab:Tartanair}\vspace{-1ex}
\small
\setlength{\tabcolsep}{5pt}
\renewcommand{\arraystretch}{1.12}
\resizebox{0.95\linewidth}{!}{%
\begin{tabular}{l cc ccc ccc}
\toprule
\multirow{2}{*}{Method}
& \multicolumn{2}{c}{Condition}
& \multicolumn{3}{c}{1-view}
& \multicolumn{3}{c}{2-views} \\
\cmidrule(lr){2-3} \cmidrule(lr){4-6} \cmidrule(lr){7-9}
& Text-free & Pose-free
& Acc.$\downarrow$ & Comp.$\downarrow$ & CD$\downarrow$
& Acc.$\downarrow$ & Comp.$\downarrow$ & CD$\downarrow$ \\
\midrule
\textsc{Aether} \cite{DBLP:journals/corr/abs-2503-18945} & \cmark& \xmark
& 3.1547 & 4.5366 & 3.8457
& 2.7745 & 3.4420 & 3.1082 \\
\textsc{WVD} \cite{DBLP:conf/cvpr/ZhangZMMTSG25} & \cmark & \xmark
& 4.3944 & 3.0660 & 3.7302
& 4.3794 & 2.5268 & 3.4531 \\
\textsc{Gen3R} \cite{DBLP:journals/corr/abs-2601-04090} & \xmark & \xmark
& 3.0250 & 2.5367 & 2.7809
& 2.2825 & 1.6643 & 1.9734 \\
\textsc{Gen3R} \cite{DBLP:journals/corr/abs-2601-04090} & \cmark& \xmark
& 3.4579 & 4.7384 & 4.0982
& \cellcolor{third!80}{2.6029} & 3.1154 & \cellcolor{second!80}2.8591 \\
\textsc{Gen3R} \cite{DBLP:journals/corr/abs-2601-04090} & \cmark & \cmark
& 3.6427 & 7.8528 & 5.7478
& 2.7441 & 6.1423 & 4.4432 \\
\midrule
VGGT-World {\scriptsize w/o Forcing} & \cmark & \cmark
& 1.2200 & 5.0435 & 3.0818
& \cellcolor{second!80}{1.2200} & 5.0435 & \cellcolor{third!80}{3.0818} \\
VGGT-World & \cmark & \cmark
& 1.1604 & 4.5297 & 2.8451
& \cellcolor{best!50}\textbf{1.1604} & 4.5297 & \cellcolor{best!50}\textbf{2.8451} \\
\bottomrule
\end{tabular}}\vspace{-1ex}
\end{table}

\begin{figure}[t]
    \centering
    \includegraphics[width=0.98\linewidth]{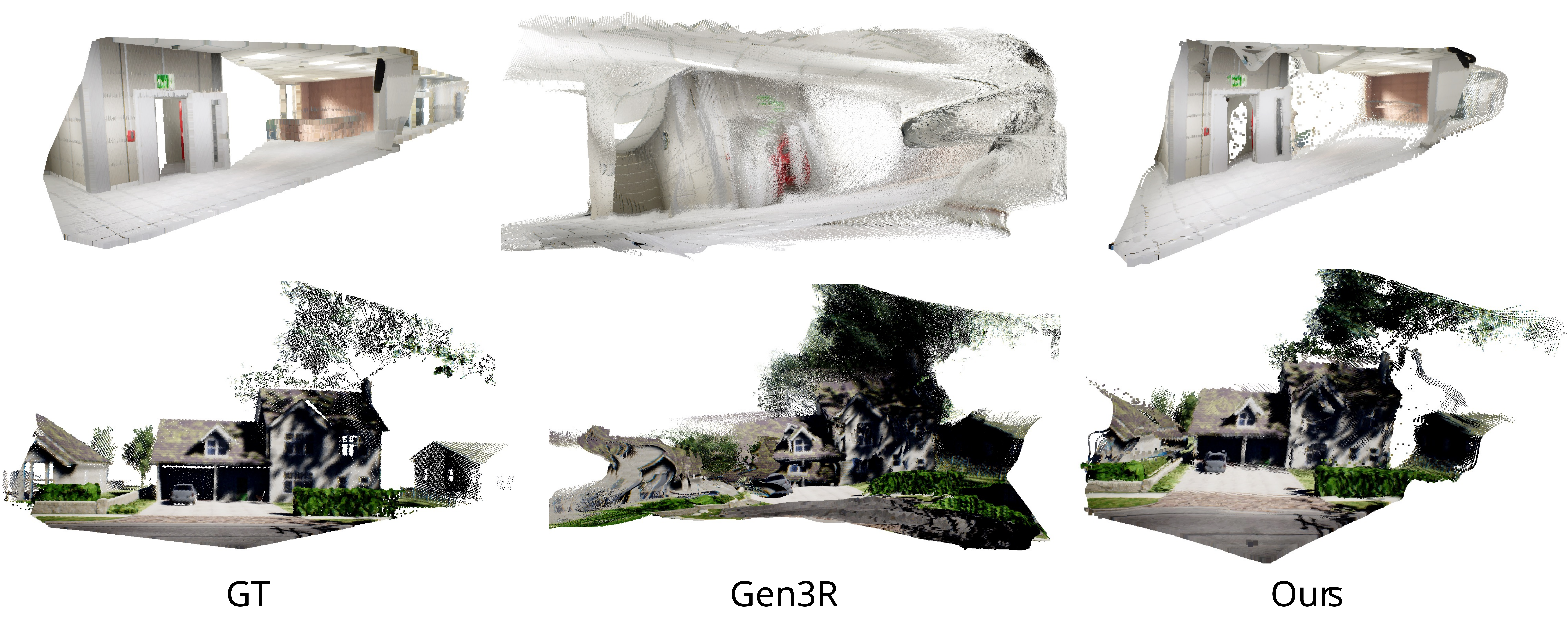}\vspace{-2ex}
    \caption{\textbf{Point cloud forecasting on TartanAir}. Gen3R produces structurally disorganized geometry on walls and rooftops, whereas our method preserves coherent structure and yields more accurate predictions.}\vspace{-1ex}
    \label{fig:tartanair}
\end{figure}

\subsection{Further Analysis}
\noindent\textbf{Impact of Forcing Curriculum.} 
Bottom part of Table~\ref{tab:cityscapes} evaluates the impact of the mixing ratio $\lambda$, where our \textsc{VGGT-World} adopts a linear scheduling strategy that gradually increases $\lambda$ as training progresses. Although the model without forcing achieves strong short-term performance, it degrades during rollout. The proposed dynamic scheduling alleviates this issue by progressively enhancing robustness to recursive conditioning while preserving the learning of scene dynamics. In contrast, static settings cause the model to learn a fixed and biased perturbation distribution; moreover, a static large mixing rate (\eg, $\lambda=0.7$) can induce catastrophic forgetting of the Stage-1 distribution. Overall, the curriculum-based design balances stable structural learning with long-horizon forecasting stability.


\begin{figure}[t]
    \centering
    \begin{minipage}[t]{0.62\linewidth}
        \centering
        \vspace{3ex}
        \includegraphics[width=\linewidth]{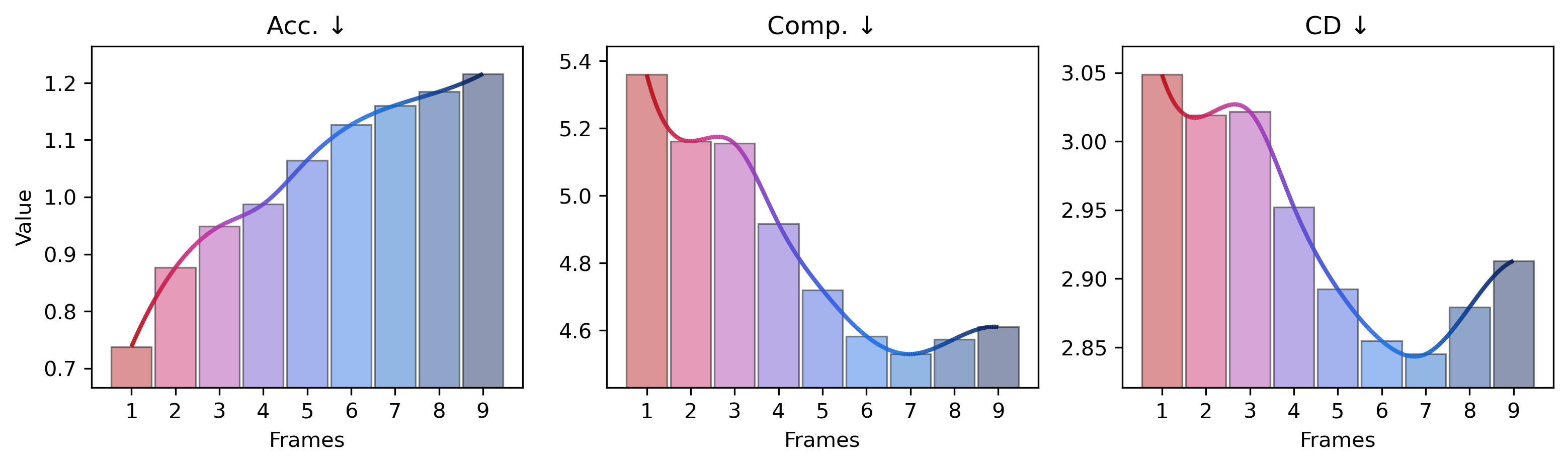}\vspace{-2ex}
        \caption{Long-term horizon forecasting on TartanAir.}
        \label{fig:tartanair_long}
    \end{minipage}
    \hfill
    \begin{minipage}[t]{0.34\linewidth}
        \centering
        \small
        \setlength{\tabcolsep}{6pt}
        \renewcommand{\arraystretch}{1.1}
        \captionof{table}{Short-term horizon foresight on TartanAir.}\vspace{1ex}
        \label{tab:shortterm_compare}
        \resizebox{1\linewidth}{!}{%
        \begin{tabular}{lcc}
        \toprule
        Metric & VGGT \cite{wang2025vggt} & VGGT-World \\
        \midrule
        Acc. $\downarrow$  & \textbf{0.8672} & 0.8764 \\
        Comp. $\downarrow$ & 1.2159 & \textbf{1.0646} \\
        CD $\downarrow$    & 1.0416 & \textbf{0.9705} \\
        \bottomrule
        \end{tabular}}
    \end{minipage}\vspace{-3ex}
\end{figure}

\noindent\textbf{Can future prediction improve present-time perception?}
We further ask whether \textsc{VGGT-World} can provide a better geometric estimation for the \emph{current} moment than directly applying VGGT to the current input alone. The short-horizon comparison in Table~\ref{tab:shortterm_compare} shows that this is largely the case. Although the accuracy metric changes only marginally, \textsc{VGGT-World} achieves clearly better completeness and a lower overall CD value, indicating a more balanced and globally consistent reconstruction. This result suggests that predictive world modeling does not merely support future imagination but can also feed back to strengthen present-time perception. For embodied agents, such a capability is particularly appealing: instead of treating perception as a purely instantaneous process, the agent can leverage short-term foresight to refine its current geometric belief, leading to more reliable downstream planning and control.

\noindent\textbf{Camera Parameter Forecasting.} Beyond depth and point cloud forecasting, \begin{wrapfigure}{r}{0.5\linewidth}
\vspace{-2ex}
\centering
\includegraphics[width=\linewidth]{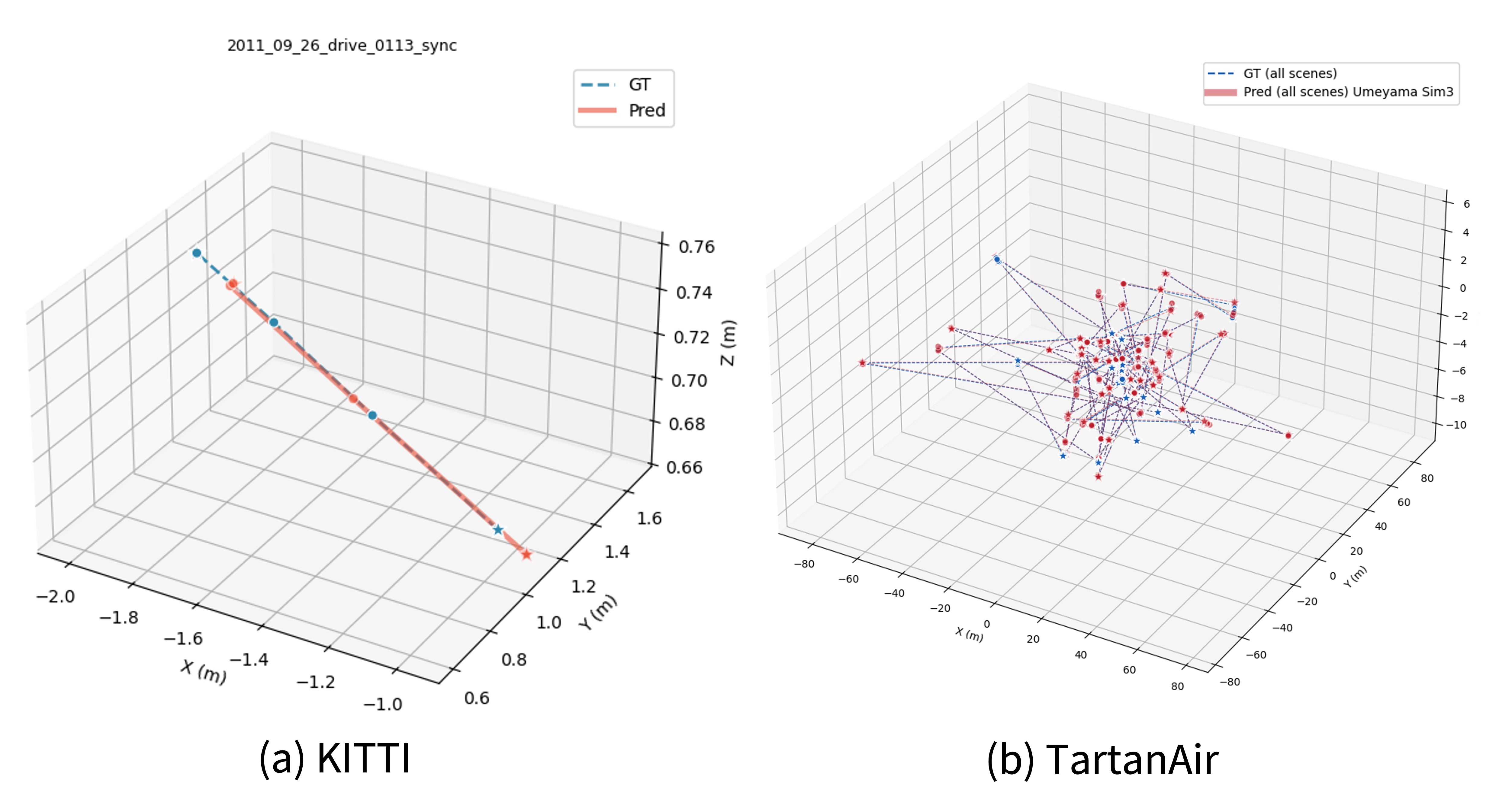}\vspace{-2ex}
\caption{Qualitative study of camera parameter prediction on KITTI and TartanAir.}
\label{fig:traj_vis}
\captionsetup{type=table}
\label{tab:traj_eval}
\vspace{-3ex}
\end{wrapfigure}we further evaluate whether the predicted next-frame VGGT features preserve camera motion cues. We predict camera parameters from the generated VGGT features and compare them against ground truth trajectories on KITTI and TartanAir. After Umeyama alignment, the predicted trajectories exhibit relatively consistent global structure with the ground truth in Fig.~\ref{fig:traj_vis}. Quantitative analysis of ATE/RTE/RRE errors is reported in the Appendix.
These results suggest that VGGT-World's latent dynamics preserve both scene structure and camera evolution without explicit pose supervision, achieving reasonable consistency without the prohibitive computational costs of appearance-driven world models.

\noindent\textbf{How far into the future should we forecast?}
We first ask whether forecasting a longer future horizon can further benefit geometry estimation when the predicted observations are aggregated with the current frame and passed to VGGT. As shown in Fig.~\ref{fig:tartanair_long}, extending the foresight horizon initially improves the reconstructed geometry, suggesting that moderate future context provides complementary structural cues beyond the instantaneous observation. However, this benefit is not monotonic. The CD reaches its best value around frame 7, indicating a practical sweet spot for horizon length. Beyond this point, performance begins to degrade, and the overall geometry exhibits distortion.

\noindent\textbf{{Efficiency Analysis.}} We evaluate the computational efficiency of \textsc{VGGT-World} against representative video-generation paradigms, including \textsc{Cosmos-12B}~\cite{DBLP:journals/corr/abs-2501-03575} and \textsc{Gen3R}~\cite{DBLP:journals/corr/abs-2601-04090}. Since they operate under varying input and rollout horizons, we adopt a normalized latency metric (seconds per generated frame) to\begin{wrapfigure}{r}{0.6\columnwidth}
    \vspace{-2ex}
    \centering
    \includegraphics[width=1\linewidth]{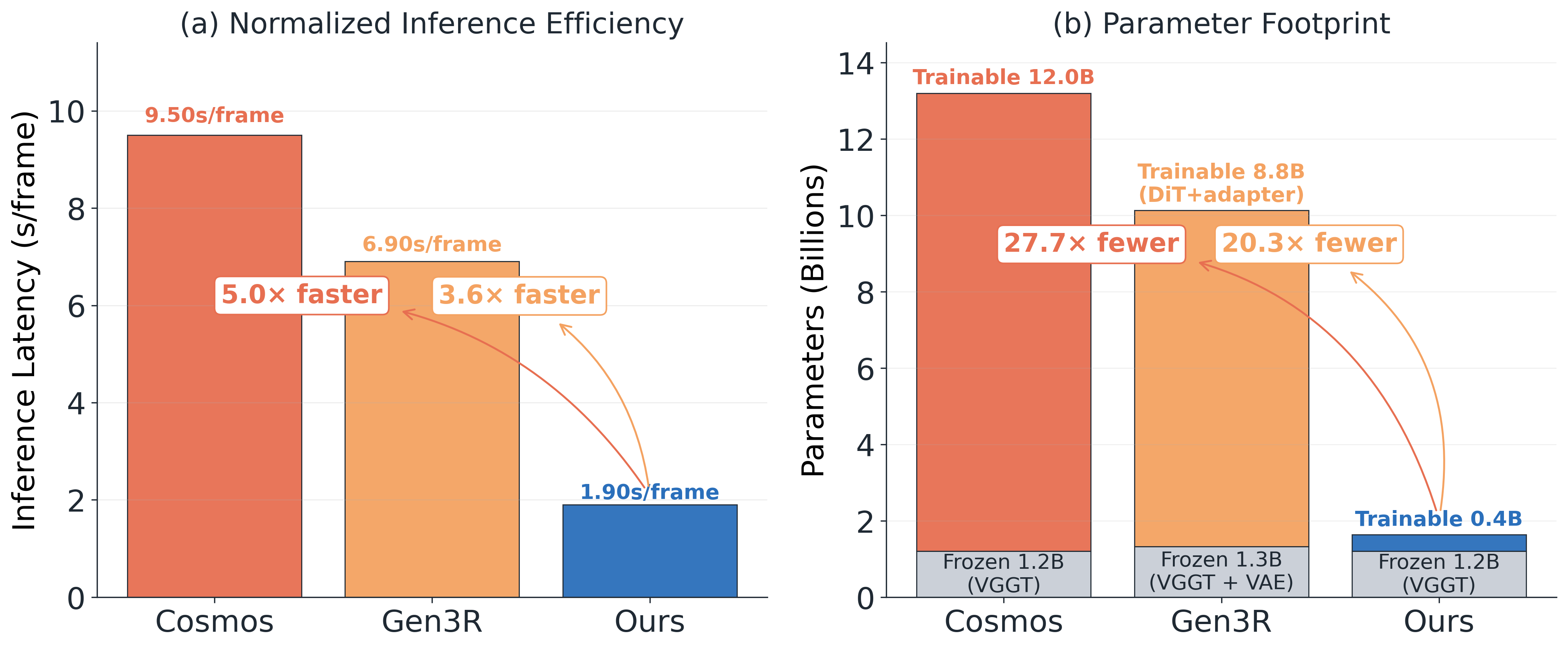}
    \caption{\textbf{Efficiency comparison with VideoGen world models.} (a) Inference latency per generated frame. (b) Trainable vs.\ frozen parameter footprint.}
    \vspace{-4ex}
    \label{fig:efficiency}
\end{wrapfigure} ensure a fair, scale-invariant comparison. 
As shown in Fig.~\ref{fig:efficiency}\textcolor{red}{a}, \textsc{Cosmos-12B} requires 9.50 s/frame to synthesize future frames on KITTI. In contrast, our approach achieves an end-to-end inference time of 1.9 s/frame. This represents a 5$\times$ reduction in per-frame latency compared to \textsc{Cosmos} and a 3.6$\times$ speedup over \textsc{Gen3R} (6.90s/frame).

\noindent\textbf{Trainable Footprint.} 
Beyond inference speed, the practicality of world models in resource-constrained settings depends heavily on the number of trainable parameters. In Fig.~\ref{fig:efficiency}\textcolor{red}{b}, our approach optimizes only 0.43B parameters by keeping the 1.3B VGGT backbone frozen. In contrast, \textsc{Cosmos-12B} requires updating all 12.0B parameters, and \textsc{Gen3R} entails a heavy trainable DiT (2.7B) alongside a geometry adapter. By modeling dynamics directly in the geometry-centric latent space, \textsc{VGGT-World} achieves a much smaller trainable footprint while keeping competitive capacity, enabling easier reproduction and faster iteration.

\section{Conclusion and Future Work}
We presented VGGT-World, an early attempt at world modeling directly in a high-dimensional geometric latent space. Despite its simplicity, our approach already surpasses substantially heavier video-generation world models on both forecasting quality and efficiency, suggesting that geometry-centric latent forecasting is a promising alternative to full-frame generation. Moreover, because VGGT-World requires no ground-truth depth or point-cloud supervision, it provides a flexible foundation for scaling geometry world models. Looking ahead, we can enable richer conditioning by incorporating camera pose and action information into the generative process; for embodied agents, action-conditioned forecasting may further support more controllable and interactive rollouts.

\clearpage  


%
%
\bibliographystyle{splncs04}
\bibliography{main}

\clearpage
\appendix

\appendix

\vspace*{0.5em}
\begin{center}
{\Large\bfseries Supplementary Material}
\end{center}
\vspace{1em}

\paragraph{Summary of the Supplementary Material.}
This supplementary document provides additional theoretical analysis, implementation details, and extended experimental results that complement the main paper.
Section~\ref{sec:theory} presents the formal derivation of the Sequential ELBO under the proposed trajectory-consistent flow forcing objective.
Section~\ref{sec:implementation} details the model architecture, training protocol, and dataset configurations.
Section~\ref{sec:results} reports additional qualitative and quantitative results on Cityscapes, KITTI, and TartanAir.
Finally, Section~\ref{sec:limitations} discusses limitations and directions for future work.

\section{Theoretical Analysis}\label{sec:theory}
In this section, we provide the formal proof for Theorem~1 presented in Section~3.3 of the main text. We demonstrate that optimizing our Trajectory-Consistent Flow Forcing objective rigorously maximizes a Sequential Evidence Lower Bound (ELBO) on the true data likelihood. Furthermore, we provide a theoretical analysis of how the variance of the rollout error interacts with the step-wise Kullback-Leibler (KL) divergence, mathematically justifying the necessity of our $\lambda$ curriculum schedule.

\begin{theorem}[Sequential ELBO under Trajectory-Consistent Flow Forcing]
Let $\mathbf{Z}_{1:S}$ be a sequence of uncorrupted geometry chunks drawn from the true data distribution. Let $\mathbf{C} = (\mathbf{c}_1^{\text{mix}}, \dots, \mathbf{c}_S^{\text{mix}})$ be the sequence of auxiliary interpolated conditions. Assuming strict causality in both the forward corruption process and the generative model, the uncorrupted marginal log-likelihood admits the following evidence lower bound:
\begin{equation}
    \begin{split}
    \log p_\theta(\mathbf{Z}_{1:S}) \ge {} & \sum_{i=1}^S \mathbb{E}_{q_{\lambda}} \Big[ \log p_\theta(\mathbf{Z}_i \mid \mathbf{c}_i^{\text{mix}}) \Big] \\
    & - \sum_{i=1}^S \mathbb{E}_{q_{\lambda}} \Big[ D_{\text{KL}}\big( q_{\lambda}(\mathbf{c}_i^{\text{mix}} \mid \mathbf{Z}_{i-1}, \mathbf{c}_{i-1}^{\text{mix}}) \parallel p_\theta(\mathbf{c}_i^{\text{mix}} \mid \mathbf{c}_{<i}^{\text{mix}}) \big) \Big].
\end{split}
\end{equation}
\end{theorem}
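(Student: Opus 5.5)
The bound is a standard variational (Jensen) argument with the mixed conditions $\mathbf{C}$ treated as latent variables. We first fix a joint generative model over data and conditions that respects strict causality:
\[
p_\theta(\mathbf{Z}_{1:S},\mathbf{C})=\prod_{i=1}^S p_\theta(\mathbf{c}_i^{\text{mix}}\mid \mathbf{c}_{<i}^{\text{mix}})\,p_\theta(\mathbf{Z}_i\mid \mathbf{c}_i^{\text{mix}}),
\]
so that $p_\theta(\mathbf{Z}_{1:S})=\int p_\theta(\mathbf{Z}_{1:S},\mathbf{C})\,d\mathbf{C}$. The variational (forward corruption) distribution is the one induced by the flow-forcing construction: $\mathbf{c}_i^{\text{mix}}=(1-\lambda)\mathbf{Z}_{i-1}+\lambda\hat{\mathbf{Z}}_{i-1,\tau_{\text{mid}}}$, where the partially denoised rollout depends only on the previous condition. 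Under the causality assumption this gives a Markov factorization
\[
q_\lambda(\mathbf{C}\mid \mathbf{Z}_{1:S})=\prod_{i=1}^S q_\lambda(\mathbf{c}_i^{\text{mix}}\mid \mathbf{Z}_{i-1},\mathbf{c}_{i-1}^{\text{mix}}),
\]
with the convention that $\mathbf{c}_1^{\text{mix}}$ is the clean observed context and $\mathbf{c}_0^{\text{mix}}$ is empty. Making explicit that $q_\lambda$ depends on the data only through $\mathbf{Z}_{i-1}$ and $\mathbf{c}_{i-1}^{\text{mix}}$ is the first step.

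The second step multiplies and divides by $q_\lambda$ inside the marginal integral and applies Jensen's inequality:
\[
\log p_\theta(\mathbf{Z}_{1:S})\ge \mathbb{E}_{q_\lambda}\big[\log p_\theta(\mathbf{Z}_{1:S},\mathbf{C})-\log q_\lambda(\mathbf{C}\mid\mathbf{Z}_{1:S})\big].
\]
Substituting both factorizations turns the log-ratio into a sum over $i$ of three terms: $\log p_\theta(\mathbf{Z}_i\mid\mathbf{c}_i^{\text{mix}})$, plus $\log p_\theta(\mathbf{c}_i^{\text{mix}}\mid\mathbf{c}_{<i}^{\text{mix}})$, minus $\log q_\lambda(\mathbf{c}_i^{\text{mix}}\mid\mathbf{Z}_{i-1},\mathbf{c}_{i-1}^{\text{mix}})$. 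Linearity of expectation then immediately produces the reconstruction sum in the theorem.

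The third step, which is the main obstacle, is converting the remaining terms into expected step-wise KL divergences. The tower property is the tool: for each $i$, condition on $(\mathbf{Z}_{1:S},\mathbf{c}_{<i}^{\text{mix}})$ and integrate out $\mathbf{c}_i^{\text{mix}}$ first. Under $q_\lambda$, given the past, $\mathbf{c}_i^{\text{mix}}$ is distributed exactly as $q_\lambda(\cdot\mid\mathbf{Z}_{i-1},\mathbf{c}_{i-1}^{\text{mix}})$. The inner expectation of $\log p_\theta-\log q_\lambda$ therefore equals $-D_{\text{KL}}(q_\lambda(\cdot\mid\mathbf{Z}_{i-1},\mathbf{c}_{i-1}^{\text{mix}})\parallel p_\theta(\cdot\mid\mathbf{c}_{<i}^{\text{mix}}))$, and the outer expectation over $q_\lambda$ gives the stated penalty. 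The delicate point is that the reconstruction term $\log p_\theta(\mathbf{Z}_i\mid\mathbf{c}_i^{\text{mix}})$ involves $\mathbf{c}_i^{\text{mix}}$ and must not interfere with this inner integration. It does not, because we split the sum before integrating and each piece is handled separately.

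We must also check that the future chunks $\mathbf{Z}_{>i-1}$ do not enter the conditional of $\mathbf{c}_i^{\text{mix}}$. This is precisely where the strict-causality hypothesis on the forward corruption is used. At $i=1$, $\mathbf{c}_1^{\text{mix}}$ is deterministic or shared under both models, so we adopt the convention that its KL term vanishes (or is kept as written). Equality in Jensen would require $q_\lambda$ to equal the true posterior, which is why the bound is tight only when the rollout error is small. This matches the remark that as $\lambda\to 0$, $q_\lambda$ concentrates near the clean chunk and the KL terms are controlled.
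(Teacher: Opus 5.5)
Your proposal is correct and follows the paper's proof step for step: Jensen's inequality with $\mathbf{C}$ as latent variables, the same causal factorization of $p_\theta(\mathbf{Z}_{1:S},\mathbf{C})$, and the same first-order Markov factorization of $q_\lambda(\mathbf{C}\mid\mathbf{Z}_{1:S})$. You then split the sum and identify the $\log p_\theta-\log q_\lambda$ terms as negative expected KL divergences. Your tower-property argument (the chain factorization makes $q_\lambda(\cdot\mid\mathbf{Z}_{i-1},\mathbf{c}^{\text{mix}}_{i-1})$ the exact conditional of $\mathbf{c}^{\text{mix}}_i$ given the past) just makes explicit what the paper asserts by ``recognizing'' the KL.

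Your closing remark on tightness is not needed for the bound and is wrong. Equality in Jensen requires $q_\lambda(\mathbf{C}\mid\mathbf{Z}_{1:S})=p_\theta(\mathbf{C}\mid\mathbf{Z}_{1:S})$, not small rollout error. Moreover, for a fixed density $p_\theta(\mathbf{c}^{\text{mix}}_i\mid\mathbf{c}^{\text{mix}}_{<i})$, the KL terms diverge as $\lambda\to 0$ rather than being controlled, because $q_\lambda$ collapses to a point mass at $\mathbf{Z}_{i-1}$.
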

\begin{proof}
  We aim to lower-bound the marginal log-likelihood of the true data sequence $\mathbf{Z}_{1:S}$. Since the explicit joint distribution of the high-dimensional autoregressive sequence is intractable, we treat the sequence of mixed conditions $\mathbf{C}$ as auxiliary latent variables. The marginal log-likelihood can be expressed by integrating out $\mathbf{C}$:
  \begin{equation}
      \log p_\theta(\mathbf{Z}_{1:S}) = \log \int p_\theta(\mathbf{Z}_{1:S}, \mathbf{C}) \, d\mathbf{C}.
  \end{equation}
  We introduce the forward corruption process $q_{\lambda}(\mathbf{C} \mid \mathbf{Z}_{1:S})$, parameterized by the curriculum schedule $\lambda$, which governs how the mixed conditions are generated via our ODE trajectory interpolation. Multiplying and dividing by this forward distribution yields:
  \begin{equation}
    \begin{aligned}
    \log p_\theta(\mathbf{Z}_{1:S}) &= \log \int q_{\lambda}(\mathbf{C} \mid \mathbf{Z}_{1:S}) \frac{p_\theta(\mathbf{Z}_{1:S}, \mathbf{C})}{q_{\lambda}(\mathbf{C} \mid \mathbf{Z}_{1:S})} \, d\mathbf{C} \\
    &= \log \mathbb{E}_{q_{\lambda}} \left[ \frac{p_\theta(\mathbf{Z}_{1:S}, \mathbf{C})}{q_{\lambda}(\mathbf{C} \mid \mathbf{Z}_{1:S})} \right].
    \end{aligned}
  \end{equation}
  Since the logarithmic function is strictly concave, we apply Jensen's Inequality to bring the logarithm inside the expectation, establishing the standard variational lower bound:

\begin{equation}
\begin{aligned}
\log p_\theta(\mathbf{Z}_{1:S})
\ge {} & \mathbb{E}_{q_{\lambda}} \left[ \log \frac{p_\theta(\mathbf{Z}_{1:S}, \mathbf{C})}{q_{\lambda}(\mathbf{C} \mid \mathbf{Z}_{1:S})} \right] \\
&= {} \mathbb{E}_{q_{\lambda}} \Big[ \log p_\theta(\mathbf{Z}_{1:S}, \mathbf{C}) - \log q_{\lambda}(\mathbf{C} \mid \mathbf{Z}_{1:S}) \Big].
\end{aligned}
\end{equation}
To map this bound to our autoregressive generation framework, we must factorize both the generative model and the forward corruption process according to their causal structure. For the generative model $p_\theta$, the joint probability of the data and conditions factorizes step-wise:
\begin{equation}
    p_\theta(\mathbf{Z}_{1:S}, \mathbf{C}) = \prod_{i=1}^S p_\theta(\mathbf{Z}_i \mid \mathbf{c}_i^{\text{mix}}) \, p_\theta(\mathbf{c}_i^{\text{mix}} \mid \mathbf{c}_{<i}^{\text{mix}}).
\end{equation}
For the forward corruption process $q_{\lambda}$, generating $\mathbf{c}_i^{\text{mix}}$ requires partially integrating the ODE solver from pure noise conditioned on the previous state $\mathbf{c}_{i-1}^{\text{mix}}$, and then interpolating the result with the ground-truth chunk $\mathbf{Z}_{i-1}$. Therefore, $q_{\lambda}$ satisfies a first-order Markov property with respect to the immediate past variables:\[
q_{\lambda}(\mathbf{C} \mid \mathbf{Z}_{1:S})
=
\prod_{i=1}^S q_{\lambda}(\mathbf{c}_i^{\text{mix}} \mid \mathbf{Z}_{i-1}, \mathbf{c}_{i-1}^{\text{mix}})
\](Note: For the initial step $i=1$, we define $\mathbf{Z}_0 = \emptyset$ and $\mathbf{c}_0^{\text{mix}} = \emptyset$.) Substituting these autoregressive factorizations back into our lower bound, we obtain:

\begin{equation}
\begin{split}
\log p_\theta(\mathbf{Z}_{1:S})
\ge {} & \mathbb{E}_{q_{\lambda}} \Bigg[
\sum_{i=1}^S \log \Big(
p_\theta(\mathbf{Z}_i \mid \mathbf{c}_i^{\text{mix}})
\, p_\theta(\mathbf{c}_i^{\text{mix}} \mid \mathbf{c}_{<i}^{\text{mix}})
\Big) \\
&\qquad\qquad\quad
- \sum_{i=1}^S \log q_{\lambda}
\big(
\mathbf{c}_i^{\text{mix}} \mid \mathbf{Z}_{i-1}, \mathbf{c}_{i-1}^{\text{mix}}
\big)
\Bigg] \\
= {} & \sum_{i=1}^S \mathbb{E}_{q_{\lambda}} \Big[
\log p_\theta(\mathbf{Z}_i \mid \mathbf{c}_i^{\text{mix}})
\Big] \\
&\qquad
+ \sum_{i=1}^S \mathbb{E}_{q_{\lambda}} \left[
\log \frac{
p_\theta(\mathbf{c}_i^{\text{mix}} \mid \mathbf{c}_{<i}^{\text{mix}})
}{
q_{\lambda}(\mathbf{c}_i^{\text{mix}} \mid \mathbf{Z}_{i-1}, \mathbf{c}_{i-1}^{\text{mix}})
}
\right].
\end{split}
\end{equation}
Recognizing that the second summation represents the negative KL divergence between the artificial forward corruption distribution and the model's learned autoregressive transition prior, we arrive at the final Sequential ELBO:
\begin{equation}
    \begin{split}
        \log p_\theta(\mathbf{Z}_{1:S}) &\ge \sum_{i=1}^S \mathbb{E}_{q_{\lambda}} \Big[ \log p_\theta(\mathbf{Z}_i \mid \mathbf{c}_i^{\text{mix}}) \Big] \\
        & - \sum_{i=1}^S \mathbb{E}_{q_{\lambda}} \Big[ D_{\text{KL}}\big( q_{\lambda}(\mathbf{c}_i^{\text{mix}} \mid \mathbf{Z}_{i-1}, \mathbf{c}_{i-1}^{\text{mix}}) \parallel p_\theta(\mathbf{c}_i^{\text{mix}} \mid \mathbf{c}_{<i}^{\text{mix}}) \big) \Big]. \blacksquare
    \end{split}
\end{equation}
\end{proof}

\section{Implementation Details}\label{sec:implementation}
\subsection{Model Architecture}
Our model is built upon the VGGT~\cite{wang2025vggt} architecture. The geometry latent representation is extracted from the early stages of the VGGT backbone, which consists of a DINO ViT encoder~\cite{DBLP:conf/iccv/CaronTMJMBJ21,DBLP:journals/tmlr/OquabDMVSKFHMEA24} followed by the first five Transformer layers (index $0-4$) of the VGGT aggregator. The remaining Transformer layers (index $5-47$) together with the task-specific heads, including camera head, depth head, and point head, serve as the decoder that maps predicted latent tokens back to geometric outputs. For the condition generation module, we adopt a Flow Transformer architecture similar to~\cite{DBLP:journals/corr/abs-2506-15742}. It consists of 8 double-stream Transformer blocks followed by 8 single-stream Transformer blocks. Throughout the model, the latent dimension is kept at 1024 without applying any feature compression. Instead of predicting noise or velocity fields as in diffusion or flow matching models, our model directly predicts the geometry latent representation $\mathbf{z}$. To encode temporal ordering, we apply 3D Rotary Positional Embedding (3DRoPE) to the concatenated sequence of condition latents and predicted latents, allowing the model to capture the temporal relationships between frames.

\subsection{Evaluation Datasets}
\begin{itemize}
\renewcommand\labelitemi{\textbullet}
\item \textbf{Cityscapes~\cite{DBLP:conf/cvpr/CordtsORREBFRS16}.}
Cityscapes is recorded at 16 FPS with a resolution of $1024 \times 2048$ and a $2{:}1$ aspect ratio. We resize the images by scaling the shorter side and then apply center cropping to obtain a resolution of $224 \times 448$. Each sequence contains 30 frames, indexed by $t \in \{0,1,\ldots,29\}$. Following~\cite{DBLP:journals/corr/abs-2507-19468}, we treat each sequence as a single forecasting sample and use frame 19 as the target frame. 
Taking frames [13, 16] as input, we predict frames 19 and 22 but focus our evaluation on frame 19. Notably, frame 19 occurs 187.5 ms after frame 16, defining the specific short-term horizon for our analysis.
Using frames [7, 10] as input, we predict a sequence up to frame 19 via autoregressive generation. While frames 13 and 16 are generated in the process, performance is measured at frame 19, corresponding to a 562.5 ms look-ahead from the last input frame.

\item \textbf{KITTI~\cite{DBLP:journals/ijrr/GeigerLSU13}.}
KITTI is recorded at 10 FPS with a resolution of $512 \times 1382$, and images are center-cropped to $224 \times 448$. Sequence lengths range from approximately 200 to 1000 frames. Following~\cite{DBLP:journals/corr/abs-2507-19468}, we treat each sequence as a single forecasting sample and use frame 13 as the target frame. 
In the short-term setting, frames [9, 11] are used to predict both frames 13 and 15, but evaluation is restricted solely to frame 13 (200 ms from frame 11). Similarly, for mid-term prediction, we autoregressively generate frames [9, 11, 13] from input [5, 7], while performance is measured only at frame 13 (600 ms from frame 7). We use Cityscapes and KITTI to evaluate short and mid horizon depth forecasting without pose condition.

\item \textbf{TartanAir~\cite{DBLP:conf/iros/WangZWHQWHKS20}.}
TartanAir is a large-scale synthetic dataset containing diverse indoor and outdoor scenes with accurate camera poses and depth annotations. The original RGB images are provided at a resolution of $640 \times 480$ and sampled at 10 FPS. We center-crop the images to $280 \times 280$ for both training and evaluation. Following~\cite{DBLP:journals/corr/abs-2601-04090}, we randomly sample 10\% of the scenes and further sample 49 frames from each scene for evaluation. In practice, we use the aggregated point cloud reconstructed from the first seven frames as the evaluation target for long-horizon point cloud forecasting under pose-free settings. 
We therefore evaluate both short- and long-horizon point cloud forecasting without pose conditions on the TartanAir dataset.

\end{itemize}

\begin{figure}
    \centering
    \includegraphics[width=0.98\linewidth]{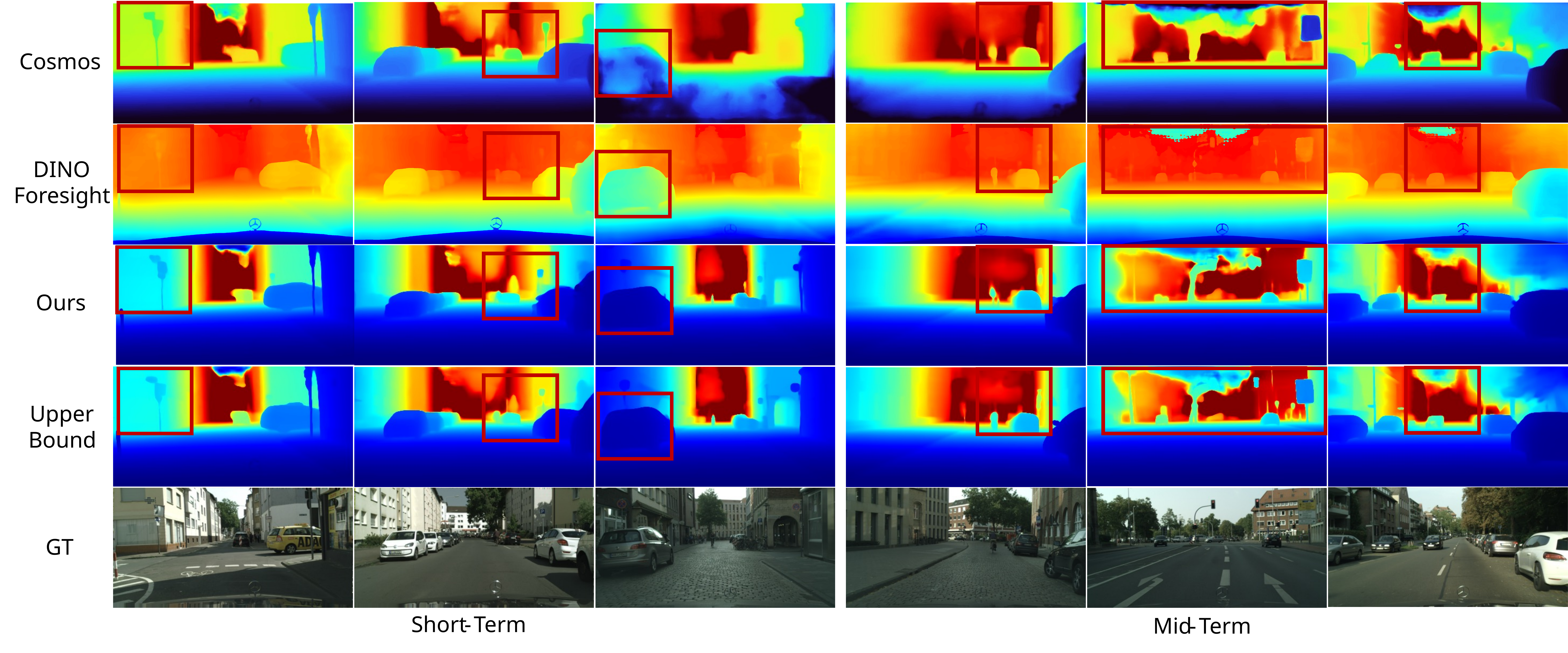}
    \caption{\textbf{The short- and mid-term depth forecasting results on Cityscapes.} ``Upper Bound'' denotes the depth obtained by feeding the real future images into VGGT. ``DINO-Foresight'' predicts depth in the scale of Depth Anything~\cite{DBLP:conf/nips/YangKH0XFZ24}, leading to a different visual appearance from the other methods.}
    \label{fig:app_cityscapes}
\end{figure}

\subsection{Chunk Size Details}
The proposed training strategy consists of two sequential stages. Initially, in the Teacher-Forcing Training stage, we train the model on sampled chunks of size 4, where the first two frames serve as the conditioning context to predict the subsequent two frames as targets. To enhance temporal stability, we introduce a second-stage Trajectory-Consistent Flow Forcing using chunks of length 5 (e.g., [0,1,2,3,4]). In this stage, we first utilize the ground-truth (GT) latents of frames 0 and 1 to predict the latents for frames 2 and 3. Subsequently, we perform a rollout step by combining the GT latent of frame 1 with the previously predicted latent of frame 2 to predict the latents of frames 3 and 4. Crucially, to mitigate exposure bias, gradients are backpropagated exclusively through this subsequent rollout stage. During inference, we maintain a consistent chunk-size-4 configuration and autoregressively generate future frames using a sliding window with a stride of 1.

\begin{figure}[t]
    \centering
    \includegraphics[width=0.98\linewidth]{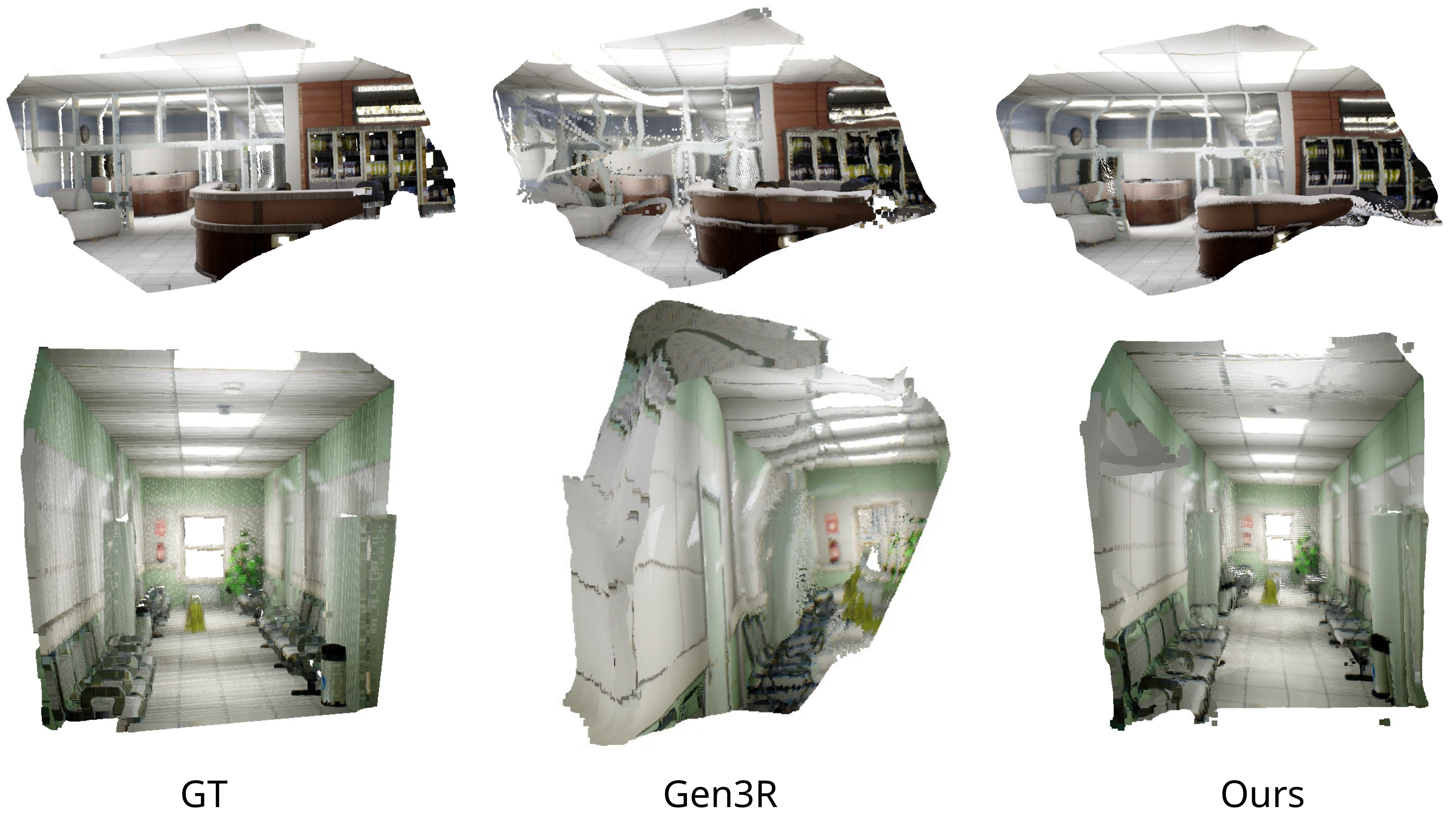}
    \caption{\textbf{Point cloud forecasting on TartanAir.} Gen3R exhibits noticeable artifacts in structurally homogeneous regions such as walls and windows.}
    \label{fig:app_tartanair2}
\end{figure}

\begin{figure}
    \centering
    \includegraphics[width=0.98\linewidth]{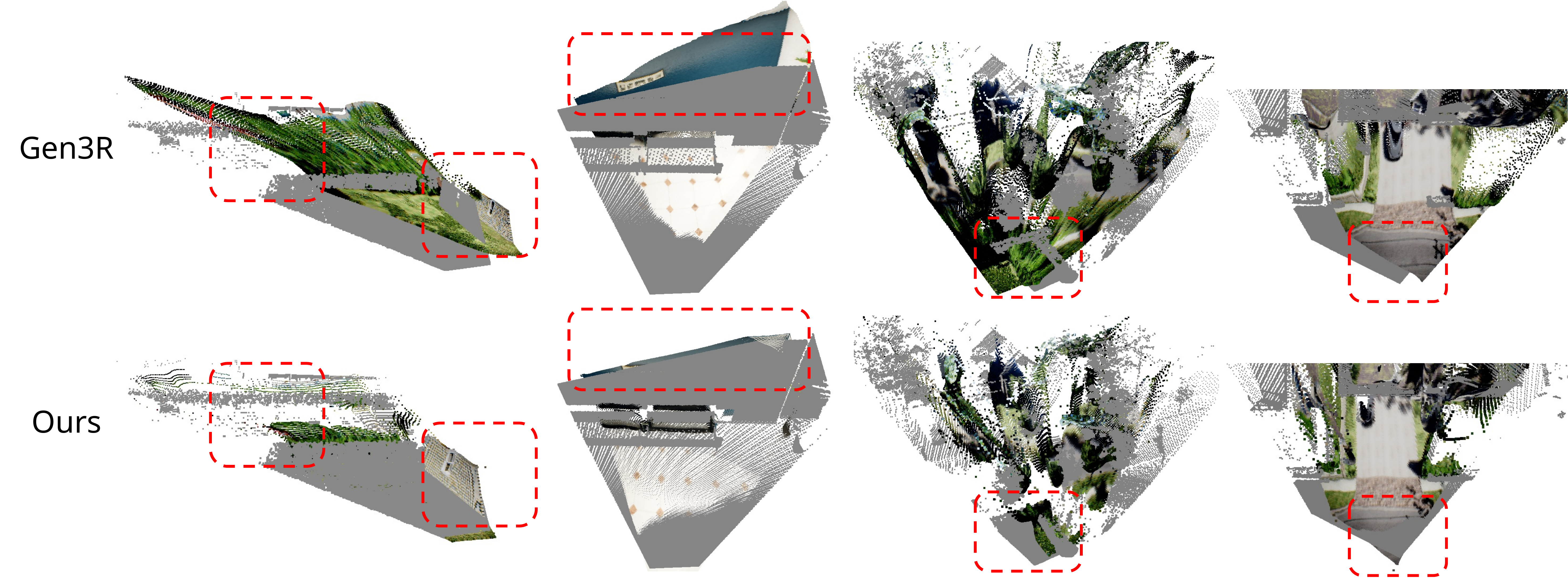}
    \caption{\textbf{Single-frame point cloud forecasting results on TartanAir.} The gray regions in each point cloud indicate the ground truth.}
    \label{fig:app_tartanair1}
\end{figure}

\section{Additional Results}\label{sec:results}
\subsection{Qualitative Results on Cityscapes}
As mentioned in Section~4.1 of the main text, we provide extended qualitative comparisons on Cityscapes in Fig.~\ref{fig:app_cityscapes}. 
Specifically, we compare our method against Cosmos~\cite{DBLP:journals/corr/abs-2501-03575}, DINO-Foresight~\cite{karypidis2025dinoforesight}, and an upper bound derived from VGGT. Cosmos, a video-generation-based model, suffers from severe artifacts in its depth prediction. This indicates that pixel-driven generation tends to prioritize perceptual plausibility over geometric consistency. 
While DINO-Foresight partially alleviates these distortions by operating within a latent space, it still exhibits geometric inaccuracies, particularly in distant regions.
In contrast, our method explicitly models dynamics in the geometric space, leading to the best overall generation quality.

\subsection{Qualitative Results on TartanAir}
As mentioned in Section~4.2 of the main text, we provide additional qualitative comparisons on TartanAir in Fig.~\ref{fig:app_tartanair2} and Fig.~\ref{fig:app_tartanair1}. Fig.~\ref{fig:app_tartanair2} provides additional full-sequence point cloud forecasting results as a supplement to the main text. Based on empirical observation, we use the point cloud forecasting results over the first seven frames for comparison. Without pose conditioning, Gen3R predicts substantial deviations from the original point cloud, which directly lead to failures in global alignment and consequently produce incomplete 3D structures, especially in regions that heavily rely on accurate alignment, such as windows and walls. In contrast, our autoregressive framework follows the trajectory implied by the conditioning frames, enabling point cloud growth along the correct geometric direction and thereby maintaining a more reasonable 3D structure. Fig.~\ref{fig:app_tartanair1} shows a single-frame comparison of the predicted point clouds at the third frame between our method and Gen3R. The gray regions denote the ground-truth geometry of the target frame, while the RGB-colored regions correspond to the predicted point cloud. Because Gen3R performs global alignment over the entire sequence, alignment errors from other frames may accumulate and adversely affect its single-frame alignment quality, resulting in substantial drift. By contrast, our method preserves much better alignment at the individual-frame level.

\subsection{Quantitative Results of Camera Parameter Estimation}
As mentioned in Section~4.2 of the main text, we provide additional quantita-\begin{wraptable}{r}{0.42\textwidth}
    \vspace{-1em}
    \centering
    \small
    \setlength{\tabcolsep}{5pt}
    \begin{tabular}{lcc}
        \toprule
        Metric & KITTI & TartanAir \\
        \midrule
        ATE $\downarrow$ (m) & 0.4157 & 0.1464 \\
        RTE $\downarrow$ (m) & 1.1700   & 0.1821 \\
        RRE $\downarrow$ ($^\circ$) & 0.5711 & 1.7350 \\
        \bottomrule
    \end{tabular}
    \caption{Quantitative study of camera parameter prediction on KITTI and TartanAir.}
    \label{tab:app_camera}
    \vspace{-2em}
\end{wraptable}tive analysis of camera prediction errors on KITTI and TartanAir in terms of ATE, RTE, and RRE in Table~\ref{tab:app_camera}. The results show that VGGT-World achieves reasonable camera consistency on both datasets. These findings indicate that the learned latent dynamics preserve not only scene structure but also the underlying camera evolution without explicit pose supervision. This further supports that our geometry-centric forecasting framework is able to capture temporally coherent structure and motion cues in a unified latent space.



\section{Limitations and Future Work}\label{sec:limitations}

\paragraph{Limitations.}
Although our framework demonstrates versatility across diverse indoor and outdoor environments, several challenges remain. First, while we adopt a self-supervised, geometry-centric approach, its scalability to large, highly heterogeneous datasets is yet to be fully established, requiring further validation of long-term generalization. Second, our design intentionally avoids external supervision (e.g., ground-truth depth or camera poses) to maintain simplicity. However, this minimal conditioning limits the degree of precise control over the generation process. Extending our current framework toward fully controllable world modeling will require integrating explicit conditioning signals.

\paragraph{Future Work.}
Building on this foundation, an important next step is to scale the proposed framework to larger and more diverse datasets, which would allow a more comprehensive assessment of its scalability and generalization across complex real-world environments. In addition, incorporating extra control signals, such as pose or action, into the generation process is a promising direction for future research. Such extensions could enable more flexible and directionally controllable generation, making the framework better suited to downstream applications that require interactive prediction and planning, including autonomous driving and embodied AI. We are currently pursuing both directions in ongoing work.

\clearpage

\end{document}